    \def\maxwidth{\ifdim\Gin@nat@width>\linewidth\linewidth
    \else\Gin@nat@width\fi}
    \definecolor{urlcolor}{rgb}{0,.145,.698}
    \definecolor{linkcolor}{rgb}{.71,0.21,0.01}
    \definecolor{citecolor}{rgb}{0,.145,.698}
    \definecolor{ansi-black}{HTML}{3E424D}
    \definecolor{ansi-black-intense}{HTML}{282C36}
    \definecolor{ansi-red}{HTML}{E75C58}
    \definecolor{ansi-red-intense}{HTML}{B22B31}
    \definecolor{ansi-green}{HTML}{00A250}
    \definecolor{ansi-green-intense}{HTML}{007427}
    \definecolor{ansi-yellow}{HTML}{DDB62B}
    \definecolor{ansi-yellow-intense}{HTML}{B27D12}
    \definecolor{ansi-blue}{HTML}{208FFB}
    \definecolor{ansi-blue-intense}{HTML}{0065CA}
    \definecolor{ansi-magenta}{HTML}{D160C4}
    \definecolor{ansi-magenta-intense}{HTML}{A03196}
    \definecolor{ansi-cyan}{HTML}{60C6C8}
    \definecolor{ansi-cyan-intense}{HTML}{258F8F}
    \definecolor{ansi-white}{HTML}{C5C1B4}
    \definecolor{ansi-white-intense}{HTML}{A1A6B2}
    \newtheorem{theorem}{Theorem}[section]
    \newtheorem{corollary}[theorem]{Corollary}
    \newcommand{\R}{\mathbb R}
    \newcommand*{\affaddr}[1]{#1} % No op here. Customize it for different styles.
    \newcommand*{\affmark}[1][]{\textsuperscript{#1}}
    \newcommand*{\email}[1]{\texttt{#1}}
\title{The SWAG Algorithm; a Mathematical Approach that Outperforms Traditional Deep Learning. Theory and Implementation}
\author{%
Saeid Safaei\affmark$^{1}$, 
Vahid Safaei\affmark$^{2}$, 
Solmazi Safaei\affmark$^{2}$, 
Zerotti Woods\affmark$^{3}$, \\
Hamid R. Arabnia*\affmark$^{1}$, 
Juan B. Gutierrez*\affmark$^{1,3,4}$\\
\affaddr{
    \affmark$^{[1]}$ Department of Computer Science, University of Georgia
}\\
\affaddr{
    \affmark$^{[2]}$ Mechanical Engineering, Yasouj University
}\\
\affaddr{
    \affmark$^{[3]}$  Department of Mathematics, University of Georgia
}\\
\affaddr{
    \affmark$^{[4]}$  Institute of Bioinformatics, University of Georgia
}\\
\email{\{ssa,zerotti.woods25,hra,jgutierr\}@uga.edu}\\
* Joint corresponding authors. 
}
\begin{document}
\maketitle
\begin{abstract}
The performance of artificial neural networks (ANNs) is influenced by  weight initialization, the nature of activation functions, and their architecture. There is a wide range of activation functions that are traditionally used to train a neural network, e.g. sigmoid, tanh, and Rectified Linear Unit (ReLU). A widespread practice is to use the same type of activation function in all neurons in a given layer. 
In this manuscript, we present a type of neural network in which the activation functions in every layer form a polynomial basis; we name this method SWAG after the initials of the last names of the authors. 
We tested SWAG on three complex highly non-linear functions as well as the MNIST handwriting data set. SWAG outperforms and converges faster than the state of the art performance in fully connected neural networks.
Given the low computational complexity of SWAG, and the fact that it was capable of solving problems current architectures cannot, it has the potential to change the way that we approach deep learning. 
\end{abstract}
%%%%%%%%%%%%%%%%%%%%%%%%%%%%%%%%%%%%%%%%%%%%%%%%%%%%%%%%%%%%%%%%%%%%%%%%%%%%%%%%%%%%%%%%%%%%%%%%%%%%
\section{Introduction}
Deep learning allows computational models that are composed of multiple processing layers, to learn very abstract representations of data\cite{lecun_deep_2015}. There has been reports of many successes using deep neural networks (DNNs) in areas such as computer vision, speech recognition, language processing, drug discovery, genomics,  and a host of other areas.\cite{janzamin_beating_2015}.
DNNs have allowed us to solve difficult problems and have motivated extensive work to understand their theoretical properties \cite{hayou2018selection}.

% In the field of language processing and computer vision, huge amounts of data have helped in the  effective train of deep neural networks. 
The process of how to effectively train a DNN is a complicated task and has been proven to be an NP-Complete problem \cite{blum_training_1989}. Features such as weight initialization, the nature of activation functions, and network architecture can affect the training process of a neural network \cite{schmidhuber2015deep} \cite{hayou2018selection}  \cite{ramachandran2018searching}. In particular,  some choices of activation functions or network architectures can cause loss of information or may increase the amount of time needed to train a DNN \cite{hayou2018selection}\cite{zhang2018effectiveness}\cite{chung2016deep}\cite{lin2017does}.
 
The question of how to effectively find  the best set of nonlinear activation functions is challenging \cite{chung2016deep}.
Some of the  well-known nonlinear activation functions are:

\begin{align}
sigmoid(x) &= 1/(1 + e^{-X})&\\
tanh(x) &= (1- e^{-2X})/(1+e^{-2X})\\ 
ReLU(x) &=\max (x; 0)
\end{align}
%In fact, the engine of deep network consist of linear transformation followed by an activation function f(·)    \cite{ramachandran2017swish}.
 The activation function in equation (3) Rectified Linear Unit (ReLU), is the most popular and widely-used activation function; and while some hand-designed activation functions have been introduced to replace ReLU, none have gained to popularity that ReLu has \cite{maas2013rectifier}\cite{clevert2015fast}
\cite{klambauer2017self}\cite{hahnloser2000digital} \cite{jarrett2009best} \cite{nair2010rectified}.
%\begin{figure}
%    \centering
%    \includegraphics[width=0.7\textwidth]{}
%    \label{fig:1}
%    \caption{Simple neural network with one layer}
%\end{figure}
%\cite{zoph2017learning} \cite{bello2017neural} used different search techniques such as reinforcement learning based search to propose new nonlinear  activation functions.
%Our method was inspired by interpolation, B-splines, and orthogonal polynomials in Approximation Theory.

Trainable nonlinear activation functions have been  proposed by \cite{chung2016deep}, \cite{temurtas2004study}. 
Chung \textit{et al.} \cite{chung2016deep} used a Taylor series approximation of $sigmoid$, $tanh$, and $ReLU$ as an initialization point for their activation functions, and trained the coefficients of the Taylor series approximation to optimize training. This implementation used the same polynomial function on each neuron of a given layer. The results were comparable to the state of the art. 

In this manuscript, we present a type of neural network in which the activation functions in every layer form a polynomial basis, i.e. groups of neurons are assigned to  unique monomials in a given layer. We also propose a new architecture in which we vertically concatenate many fully connected layers to form one layer that makes computation more efficient. We do not train activation functions. Our activation functions are fixed and they form a polynomial basis. The structure of the hidden layers follows the pattern of: (i) a layer with  polynomials as the activation functions, and (ii) a layer with a linear activation function.   

The remaining of this manuscript is organized as follows: Section 2 describes the mathematical foundations and the architecture for SWAG, Section 3 describes the experiments that were conducted, and section 4 is a discussion of results and future work.

%\begin{figure}\label{fig:Our Model}
%\centering
%\includegraphics[width=0.7\textwidth]{}
%\caption{\label{fig:Graph} Single neuron architecture in the proposed activation function. \cite{chung2016deep} }
%\end{figure}

%%%%%%%%%%%%%%%%%%%%%%%%%%%%%%%%%%%%%%%%%%%%%%%%%%%%%%%%%%%%%%%%%%%%%%%%%%%%%%%%%%%%%%%%%%%%%%%%%%%%
%%%%%%%%%%%%%%%%%%%%%%%%%%%%%%%%%%%%%%%%%%%%%%%%%%%%%%%%%%%%%%%%%%%%%%%%%%%%%%%%%%%%%%%%%%%%%%%%%%%%

\section{Methods}

\subsection{Representation of functions with a basis}
Suppose that we have a data set $\{\mathbf{x}_j\}$ for $1 \leq j \leq n$ and labels $\{\mathbf{y}_j\}$ that corresponding to our data set. We would like to find a function $f(x)$ such that $f(x_j)=y_j$ for all $1\leq j\leq n$. The Stone-Weierstrass approximation theorem states that any continuous real valued function on a compact set can be uniformly approximated by a polynomial. Formally:   
\begin{theorem}[Stone-Weierstrass Approximation Theorem]
Suppose $f$ is a continuous real-valued function defined on any closed and bounded subset $X\in \R ^m$ for any $m\in \mathbb{N}$. For every $\epsilon >0$, there exists a polynomial $p(x_1,x_2,\dots ,x_m)$ such that $|f(x_1,x_2,\dots ,x_m)-p(x_1,x_2,\dots ,x_m)|<\epsilon$ for any $(x_1,x_2,\dots ,x_m)\in X$ 
\end{theorem}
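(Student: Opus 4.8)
The plan is to give a constructive proof via Bernstein polynomials, reducing the multivariate problem to the unit cube. First, since $X$ is closed and bounded it is compact, so it is contained in some cube $C=[a,b]^m$. I would invoke the Tietze extension theorem to extend $f$ to a function $\tilde f$ that is continuous on all of $C$; any polynomial that approximates $\tilde f$ uniformly on $C$ will in particular approximate $f$ uniformly on $X\subseteq C$. After an affine change of coordinates I may assume $C=[0,1]^m$, which is harmless because affine substitutions carry polynomials to polynomials and preserve uniform approximation.

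Next I would introduce the degree-$N$ multivariate Bernstein polynomial
\begin{equation}
B_N(\tilde f)(x) = \sum_{k_1=0}^{N}\cdots\sum_{k_m=0}^{N} \tilde f\!\left(\tfrac{k_1}{N},\dots,\tfrac{k_m}{N}\right) \prod_{i=1}^{m}\binom{N}{k_i}\, x_i^{k_i}(1-x_i)^{N-k_i},
\end{equation}
which is manifestly a polynomial in $x=(x_1,\dots,x_m)$. The goal is then to show that $B_N(\tilde f)\to\tilde f$ uniformly on $[0,1]^m$ as $N\to\infty$; choosing $N$ large enough produces the desired $p$, and restricting to $X$ finishes the statement.

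The core estimate rests on three facts about the Bernstein weights $w_k(x)=\prod_i\binom{N}{k_i}x_i^{k_i}(1-x_i)^{N-k_i}$: they are nonnegative, they sum to $1$, and they concentrate near $k/N=x$. Writing $\tilde f(x)-B_N(\tilde f)(x)=\sum_k (\tilde f(x)-\tilde f(k/N))\,w_k(x)$, I would split the index set into those $k$ with $\|k/N-x\|$ small and those with it large. On the small-distance part, uniform continuity of $\tilde f$ on the compact cube bounds each difference by $\epsilon/2$; on the large-distance part, a Chebyshev-type argument built on the moment identity $\sum_{k} (k_i/N-x_i)^2 w_k(x)=x_i(1-x_i)/N\le 1/(4N)$, summed over the $m$ coordinates, shows the total weight there is $O(m/N)$, so that contribution is negligible once $N$ is large. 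I expect the multidimensional bookkeeping in this split — coordinating the uniform-continuity modulus with the $m$-fold variance bound so that the final estimate is uniform in $x$ — to be the main obstacle, though it becomes routine once the one-variable moment identities are established and the weights are factored coordinatewise.
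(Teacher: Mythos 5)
The paper does not prove this statement at all: it is quoted as a classical result (the Weierstrass/Stone--Weierstrass approximation theorem) and used only as a black box to justify the corollary about the monomial basis $\sigma_p = x^p/p!$. So there is no in-paper argument to compare yours against, and the question is only whether your blind proof is sound. It is. The chain compactness $\Rightarrow$ $X$ sits in a cube, Tietze extension of $f$ to the cube, affine reduction to $[0,1]^m$, and uniform convergence of the tensor-product Bernstein operator via positivity, the partition-of-unity identity, and the second-moment bound $\sum_k (k_i/N-x_i)^2 w_k(x) = x_i(1-x_i)/N \le 1/(4N)$ is the standard constructive proof of exactly this statement, and every step you name is legitimate (continuity of $f$ on the compact set $X$ gives boundedness, so Tietze applies and the extension is uniformly continuous on the cube). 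The only imprecision is cosmetic: after splitting at distance $\delta$ coming from the modulus of continuity, the Chebyshev estimate on the far indices gives total weight at most $m/(4N\delta^2)$, i.e.\ the bound depends on $\delta$ as well as on $m/N$; since $\delta$ is fixed before $N$ is chosen, this still vanishes as $N\to\infty$ and the argument closes. If anything, your route gives more than the paper asks for, namely an explicit approximant and a rate in terms of the modulus of continuity of the Tietze extension.
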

 The simplicity of polynomial systems make them very attractive analytically and computationally. They are easy to form and have well-understood properties. The use of polynomials of a given degree as activation functions for all neurons in a single layer seems to be mathematically discouraged in traditional neural network settings because they are not universal approximators. Particularly, Leshno \textit{et al}. (1993) \cite{leshno_multilayer_1993} proved the following theorem:
\begin{theorem}
Let $M$ be the set of functions which are $L_{loc}^\infty(\R)$ with the property that  the closure of the set of points of discontinuity of any function in $M$ has zero Lebesgue measure. Let $\sigma\in M$. Then for a fixed $x\in\R ^n$, $$span\{\sigma\{w \cdot x+\Theta\}:w\in\R ^n, \Theta\in\R\}$$ is dense in $\mathbb{C}(\R ^n)$ if and only if $\sigma$ is not an algebraic polynomial (a.e.)
\end{theorem}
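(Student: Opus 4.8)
The plan is to prove the two implications separately, disposing of necessity quickly and concentrating on sufficiency. For necessity, suppose $\sigma$ agrees almost everywhere with an algebraic polynomial of degree $k$. Then for every $w\in\R^n$ and $\theta\in\R$ the function $x\mapsto\sigma(w\cdot x+\theta)$ is (a.e.) a polynomial in $x$ of total degree at most $k$, so the entire span lies inside the finite-dimensional space of polynomials of degree $\le k$. A finite-dimensional subspace is closed and proper in $C(\R^n)$ (with the topology of uniform convergence on compacta), hence cannot be dense. This direction requires no deep input.

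For sufficiency assume $\sigma$ is not a polynomial a.e.; I would first settle the one-dimensional case $n=1$ and then lift it. The first move is to replace the rough $\sigma$ by a smooth surrogate lying in the closure of the span. Fix a mollifier $\varphi\in C_c^\infty(\R)$ and form the convolution
$$
(\sigma*\varphi)(t)=\int_\R \sigma(t-s)\,\varphi(s)\,ds .
$$
Approximating this integral by Riemann sums exhibits $\sigma*\varphi$ as a uniform-on-compacta limit of finite linear combinations of translates $\sigma(t-s_i)$, each obtained with dilation parameter $\lambda=1$, so $\sigma*\varphi$ belongs to the closure of $\mathrm{span}\{\sigma(\lambda t+\theta)\}$. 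Here the hypothesis that the closure of the discontinuity set of $\sigma$ has measure zero is exactly what secures local integrability and the convergence of these Riemann sums. The key lemma is that, because $\sigma$ is not a polynomial, one can choose $\varphi$ so that $g:=\sigma*\varphi\in C^\infty(\R)$ is itself not a polynomial; otherwise $\sigma*\varphi$ would be polynomial for every $\varphi$, forcing the sufficiently high distributional derivatives of $\sigma$ to vanish and hence $\sigma$ to be polynomial a.e., a contradiction.

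With a smooth non-polynomial $g$ in the closure, the central idea is to differentiate with respect to the dilation parameter. For fixed $t$ the functions $\lambda\mapsto g(\lambda t+\theta)$ are again limits of span elements (substitute $\lambda t+\theta-s_i$ into the Riemann-sum representation of $g$), and their $\lambda$-difference quotients converge uniformly on compacta, so the closure of the span contains
$$
\frac{\partial^k}{\partial\lambda^k}\,g(\lambda t+\theta)\Big|_{\lambda=0}=t^k\,g^{(k)}(\theta).
$$
Since $g$ is smooth but not a polynomial, no derivative $g^{(k)}$ vanishes identically, so for each $k$ there is a $\theta_k$ with $g^{(k)}(\theta_k)\ne 0$; dividing by that nonzero constant places every monomial $t^k$ in the closure, and Theorem~2.1 (Stone--Weierstrass) then yields density in $C(\R)$. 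To pass to $\R^n$, I would invoke the classical fact that powers of linear forms $(w\cdot x)^k$ span the polynomials on $\R^n$: approximating $t\mapsto t^k$ by combinations $\sum_i c_i\,\sigma(\lambda_i t+\theta_i)$ and substituting $t=w\cdot x$ produces $\sum_i c_i\,\sigma((\lambda_i w)\cdot x+\theta_i)$, a genuine element of the $n$-dimensional span, so each $(w\cdot x)^k$, and hence every polynomial, lies in its closure; Stone--Weierstrass again gives density in $C(\R^n)$.

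The main obstacle is the smoothing step for a merely locally bounded $\sigma$. One must justify that the convolution is well defined and continuous, that its Riemann sums converge uniformly on compact sets so as to approximate it by actual span elements, and---most delicately---that smoothing does not turn every non-polynomial into a polynomial. Controlling the bad set of $\sigma$, where it is discontinuous or only defined up to a null set, is precisely where the measure-zero-closure hypothesis is used, and establishing the lemma that \emph{some} convolution $\sigma*\varphi$ is non-polynomial (via the distributional derivatives of $\sigma$) is the technical heart of the argument. Once a smooth non-polynomial representative is in hand, the parameter-differentiation and Stone--Weierstrass steps are comparatively routine.
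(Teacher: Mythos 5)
The paper does not actually prove this statement: it is quoted verbatim from Leshno \emph{et al.} (1993) \cite{leshno_multilayer_1993} and used as a black box, so there is no internal proof to compare against. Your sketch is, in substance, a faithful reconstruction of the original Leshno--Lin--Pinkus--Schocken argument (in the form popularized by Pinkus's survey): necessity via the observation that the span sits inside the finite-dimensional, hence closed and proper, space of polynomials of bounded degree; sufficiency via mollification to place a smooth surrogate $\sigma*\varphi$ in the closure of the span, differentiation in the dilation parameter to extract the monomials $t^k g^{(k)}(\theta)$, Weierstrass in one variable, and the ridge-function fact that the powers $(w\cdot x)^k$ span the homogeneous polynomials to lift to $\R^n$. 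All of these steps are correct. The one place where your outline is genuinely thinner than the real proof is the claim that if $\sigma*\varphi$ were a polynomial for \emph{every} $\varphi\in C_c^\infty$ then $\sigma$ would be a polynomial a.e.: as stated, the degree of $\sigma*\varphi$ could a priori depend on $\varphi$, and the standard way to rule this out is a Baire category argument on the closed sets $\{\varphi: \deg(\sigma*\varphi)\le m\}$ to get a uniform degree bound $m$, after which the vanishing of the $(m+1)$-st distributional derivative of $\sigma$ gives the conclusion. You correctly flag this as the technical heart, but you should be aware that ``forcing the sufficiently high distributional derivatives to vanish'' hides exactly this uniformity issue. With that step filled in, your proposal is a complete and correct proof of the cited theorem, which is more than the paper itself supplies.
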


This theorem implies that fully connected feedforward neural networks with a sufficient number of neurons are universal approximators if and only if the activation functions are not polynomials. We note that in this traditional setting it is assumed that the activation function is the same for every neuron in a given layer. We now give the following extension of the Stone-Weierstrass approximation theorem 
\begin{corollary}
Let $\sigma_{p}=\frac{x^p}{p!}$ for $0\leq p< \infty$. Then 
  $$span\{\sigma_p\{w \cdot x+\Theta\}:w\in\R ^n,\Theta\in\R\}$$ is dense in $\mathbb{C}(X^n)$ where $X^n\in\R ^n$ is a compact set.
  %and $\sigma_k\{w \cdot x+\Theta\}$ is taking the $p^{th}$ power of $w*x+\Theta$ point wise.  
\end{corollary}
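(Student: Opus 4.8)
The plan is to show that the indicated span already contains every polynomial in $x_1,\dots,x_n$, and then to invoke the Stone--Weierstrass Approximation Theorem (Theorem 2.1) to pass from polynomials to density in $\mathbb{C}(X^n)$. Since $\sigma_p(w\cdot x+\Theta)=\tfrac{1}{p!}(w\cdot x+\Theta)^p$ and the factor $\tfrac{1}{p!}$ is a nonzero scalar, it is equivalent to work with the powers of affine forms $(w\cdot x+\Theta)^p$. Each such function is itself a polynomial, so the span is contained in the space of polynomials; the content of the statement is the reverse inclusion.

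First I would reduce to the homogeneous case. Every polynomial is a finite sum of its homogeneous components, so it suffices to prove that for each fixed degree $d\ge 0$ the powers of \emph{linear} forms $(w\cdot x)^d$ (that is, the choice $\Theta=0$) span the finite-dimensional space $\mathcal H_d$ of homogeneous polynomials of degree $d$ in $n$ variables. Granting this for every $d$, the span of $\{(w\cdot x)^d\}$ contains each $\mathcal H_d$, and summing over $d$ recovers all of $\R[x_1,\dots,x_n]$.

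The heart of the argument --- and the step I expect to be the main obstacle --- is the claim that the $d$-th powers of linear forms span $\mathcal H_d$. Because $\mathcal H_d$ is finite-dimensional, this is equivalent to showing that no nonzero linear functional on $\mathcal H_d$ annihilates every $(w\cdot x)^d$. I would represent an arbitrary functional through the apolarity (polarization) pairing: to a form $g=\sum_{|\beta|=d}c_\beta x^\beta\in\mathcal H_d$ associate the constant-coefficient differential operator $g(\partial)=\sum_{|\beta|=d}c_\beta\,\partial_1^{\beta_1}\cdots\partial_n^{\beta_n}$, which ranges over all functionals as $g$ ranges over $\mathcal H_d$ since $\partial^\beta x^\alpha=\beta!\,\delta_{\alpha\beta}$ for $|\alpha|=|\beta|=d$ makes the pairing nondegenerate. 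A direct computation with the multinomial expansion then gives $g(\partial)\,(w\cdot x)^d=d!\,g(w)$. Hence if the functional kills every power $(w\cdot x)^d$, then $g(w)=0$ for all $w\in\R^n$, forcing $g\equiv 0$ and therefore the functional to be zero. This establishes that the powers of linear forms span $\mathcal H_d$.

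Finally I would assemble the pieces: the span of the functions $\sigma_p(w\cdot x+\Theta)$ equals $\R[x_1,\dots,x_n]$, and since $X^n\subset\R^n$ is compact, Theorem 2.1 shows the polynomials are dense in $\mathbb{C}(X^n)$ in the uniform norm. Because the span contains this dense subset, it is itself dense in $\mathbb{C}(X^n)$, as claimed. The only delicate point is the apolarity computation; the reduction to the homogeneous case and the concluding appeal to Stone--Weierstrass are routine.
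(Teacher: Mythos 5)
Your proof is correct, and it is genuinely more complete than the paper's. The paper's entire argument is two sentences: it observes that $\{\sigma_p\}_{p=0}^{\infty}=\{x^p/p!\}$ is a basis for the vector space of polynomials over $\R$ and then invokes Stone--Weierstrass. That is essentially a one-variable argument: for $n=1$, taking $w=1$ and $\Theta=0$ exhibits every monomial (up to the nonzero scalar $1/p!$) inside the span, so the span contains $\R[x]$ and density follows. For general $n$ the paper silently skips the step you correctly identify as the heart of the matter, namely that the $p$-th powers of affine forms $(w\cdot x+\Theta)^p$ span all of $\R[x_1,\dots,x_n]$; this is not a formal consequence of the one-variable basis remark. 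Your reduction to homogeneous components together with the apolarity computation $g(\partial)(w\cdot x)^d=d!\,g(w)$ supplies exactly the missing ingredient, and the duality argument (a functional annihilating every $d$-th power of a linear form corresponds to a $g$ vanishing identically on $\R^n$, hence $g\equiv 0$) is a clean, standard way to close it. In short, the paper's route buys brevity at the cost of rigor when $n>1$, while yours actually proves the multivariate claim; as you note, once the homogeneous case is settled the shift $\Theta$ plays no role beyond enlarging the span.
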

\begin{proof}
Notice that $\{\sigma_{p}\}_{p=0}^\infty$ is a basis for the vector space of polynomials over $\R$. So since we know that polynomials are dense in $\mathbb{C}(X^n)$ by the Stone-Weierstrass approximation theorem the result follows.  
\end{proof}

This corollary implies that if we allow a diverse set of polynomial activation functions in a particular layer we will still have the result of universal approximation capabilities of feedfoward neural networks. Using the same framework as Leshno \textit{et al}. (1993) \cite{leshno_multilayer_1993}, in which the output was assumed to be in $\R^n$, an extension to higher dimensions can be easily obtained by re-defining $\sigma_{p}\{w\}$ as a pointwise operation that takes  each element of $w$ and raises it to the $p^{th}$ power, e.g. given $w = [2,3]$, then $\sigma_4 \{w\} = [2^4, 3^4]$.

\subsection{Architecture of the SWAG Algorithm}
Let $x_j\in \R ^d$ be a data point in our data set $\{x_j\}_{j=1}^{n}$.
\begin{itemize}
    \item[0] Normalize data to be in the interval [0,1]. 
    \item[1] Create the first polynomial layer as follows:
    \begin{itemize}
        \item[1.1] Choose a $k$ for the number of of polynomial terms used ($k$ is a hyperparameter of the model). 
        \item[1.2] Choose $l$ for the number of neurons that correspond to each monomial of the $1^{st}$ layer ($l$ is a hyperparameter of the model). 
        \item[1.3] Create $k$ fully-connected layers with $l$ neurons in each layer, all with $x_j$ as their inputs.
        \begin{itemize}
            \item[1.3.1] The $p^{th}$ fully-connected layer for $1\leq p\leq k$ is defined by $\sigma_p\{W  x+b\}$ for $W\in l\times d$, $b\in \R ^l$, and $\sigma_p$ as defined above.
            \item[1.3.2] Initialization of weights are random and drawn from $\mathcal{N}(0,1)$, a Gaussian distribution with mean 0 and standard deviation 1.  
        \end{itemize}
        \item[1.4] Vertically concatenate the $k$ layers to form a vector of length $l \cdot k$
    \end{itemize}
    \item[2] Create a layer with a linear activation function. This is considered the second layer of SWAG. 
    \item[3] To add a third and fourth layer,  repeat the structure of the previous 2 layers with the input of the third layer as the the output of the second layer. If a third and fourth layer is added then the first dimension of the matrix used in the second layer is a hyperparameter of the model. 
    \item[4] Continue to add layers in this pattern as desired. 
    \item[5] The matrix used for the final linear activation layer will have its first dimension be the dimension of the output vector.
\end{itemize}

Figure \ref{fig:Graph1} is a diagram of an example of SWAG using two layers and Figure  \ref{fig:Graph2} is a diagram of an example of SWAG with four layers.

\begin{figure}[h]\label{fig:Our Model}
    \centering
    \includegraphics[width=0.7\textwidth]{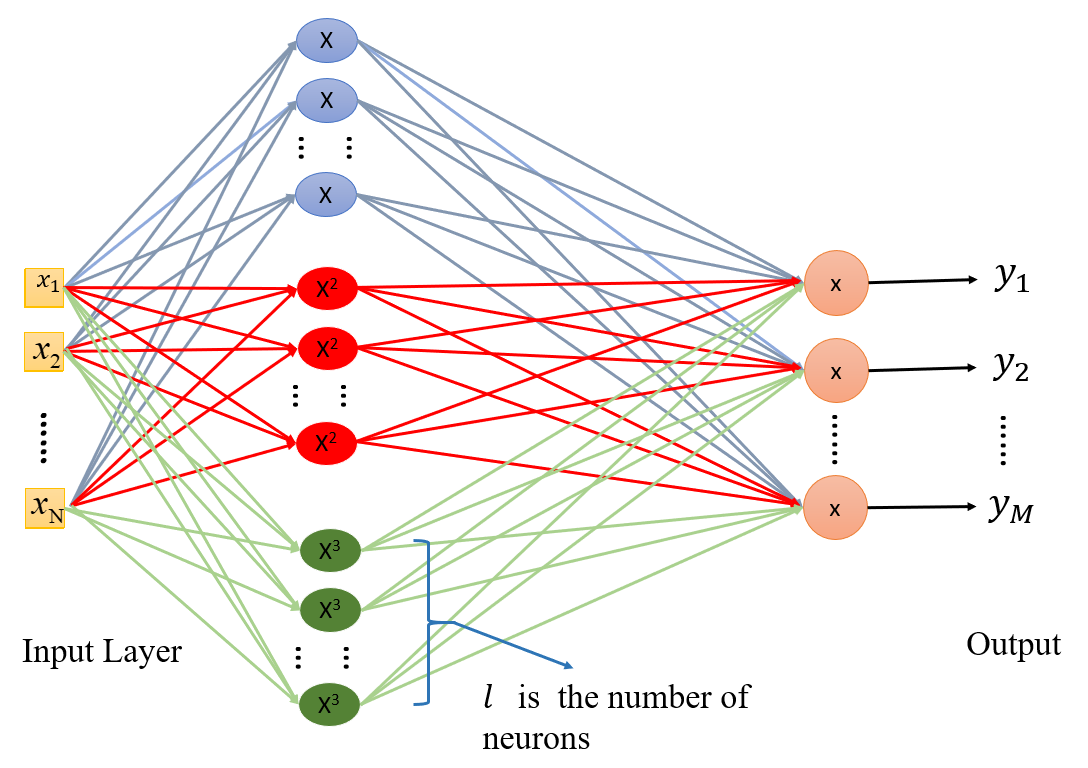}
    \caption{ Implementation of the SWAG architecture with three groups of monomials of powers 1 through 3,  and two layers}
    \label{fig:Graph1}
\end{figure}
\begin{figure}[h]\label{fig:Our Model1}
    \centering
    \includegraphics[width=0.7\textwidth]{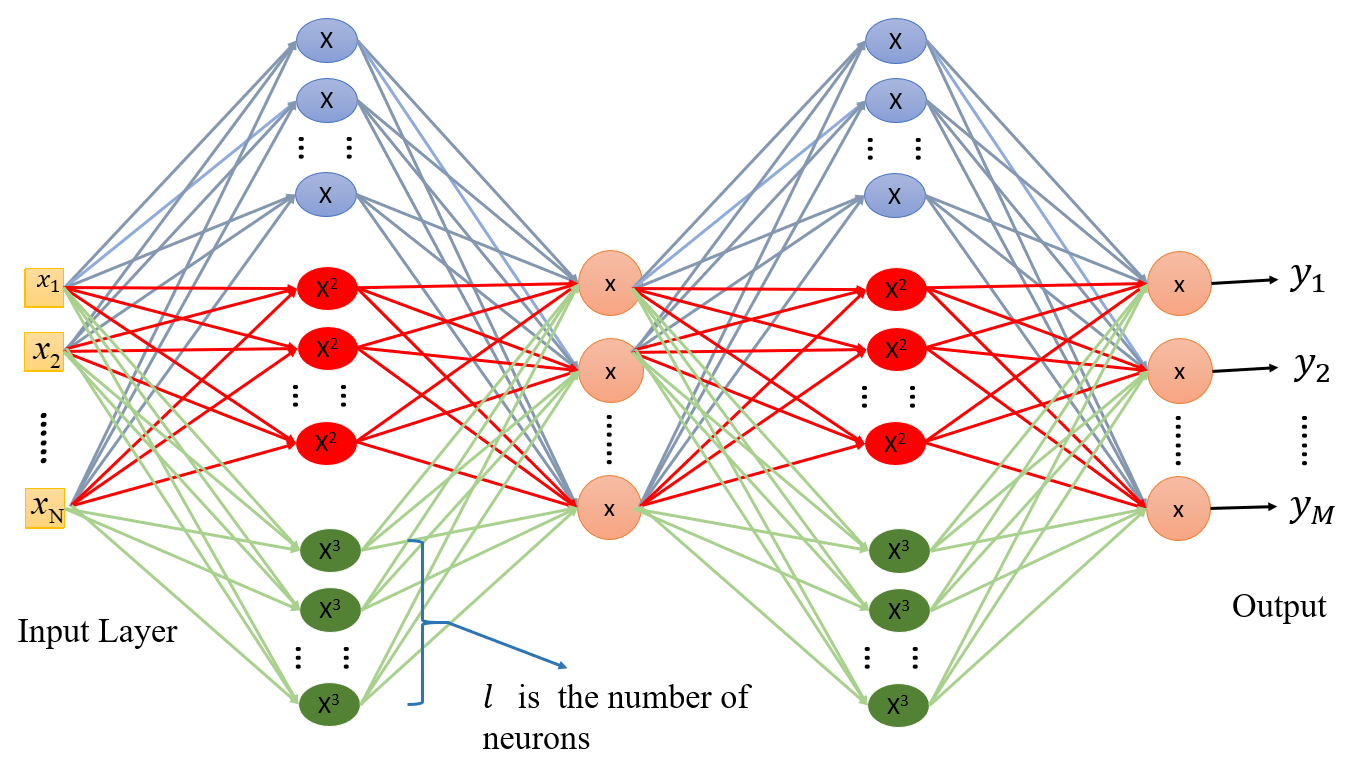}
    \caption{  Implementation of the SWAG architecture with three groups of monomials of powers 1 through 3, and four layers}
    \label{fig:Graph2}
\end{figure}

\section{ Results }

%%%%%%%%%%%%%%%%%%%%%%%%%%%%%%%%%%%%%%%%%%%
\subsection{Representation of Non-Linear Functions}

%We began testing our model by generating a data set using python numpy package \cite{oliphant2006guide}. 
To test our model, we generated a random data set $X_{train} = \{x_j\}_{j=1}^{1000}$, with $x_j \in (0,1)$ as the vector for training and $X_{test} = \{x_k\}_{k=1}^{200}$, with $x_k \in (0,1)$ as the vector for testing. We selected three functions for which traditional DNNs do not converge at all, or require a number of epochs orders of magnitude larger than SWAG to converge.   
% $$F_{0}=30 x^{7}$$
% $$F_{1}=3 x^{3}+20*x^{12}+e^{x} $$
% $$F_{2}=2*x^{5}$$
% $$F_{3}=3*x^{9}$$
% $$ F_{4}=10*x^{11}-  \frac{\mathrm{1} }{\mathrm{1} + e^{x} }  $$
\begin{align}
    F_{1} &= \frac{1}{2}x^{2}-  5\left(\frac{\mathrm{1} }{\mathrm{1} + e^{x} }\right)\\
    F_{2} &=6 x^{5}- 3 \left(\frac{\mathrm{1} }{\mathrm{1} + e^{x} } \right) +e^{x} - 9 \ {\log_{10} (x)}\\
    F_{3} &=22  x^{20}-  \frac{\mathrm{1} }{\mathrm{1} + e^{x} } + 2 \ e^{x}+5 \ {\log_{10} (x)}
\end{align}

$$ 1\leq i\leq 3    \qquad   Y_{i_{train}}= F_{i}(X_{train})       \qquad    \qquad Y_{i_{test}}= F_{i}(X_{test})$$

% $$  Y_{2_{train}}= F_{2}(X_{train})       \qquad   and \qquad Y_{2_{test}}= F_{2}(X_{test})$$

% $$  Y_{3_{train}}= F_{3}(X_{train})       \qquad   and \qquad Y_{3_{test}}= F_{3}(X_{test})$$

We trained 5 traditional DNNs of various architectures (code in appendix). We also trained SWAG, with $l=50$, $k=8$, and we used 4 layers. The first dimension of the second layer in this implementation of SWAG was $50$. We used the standard mean squared loss function with Adam optimizer to test model accuracy \cite{kingma2014adam}. 

We conducted a first experiment for $F_1$, as shown in Figure \ref{fig:Graph2}. SWAG is the only model that had the cost function converge to zero after 50 epochs of training on $F_1$. We also note that Figure \ref{fig:Graph3} gives a visual representation of how the different architectures reconstructed $F_1$. 
%We do not claim that we have a measure of reconstructability of functions using DNNs but visually it seems as though our method is far superior than traditional methods.   

We conducted a second experiment with $X_{train}=\{ 0.01,0.02,0.03,\hdots,1 \} $ and \\$X_{test}=\{ 0.015,0.025,0.035\hdots,0.985 \}$ . This allows the test and training sets to have almost the same number of points.
$$ 1\leq i\leq 3    \qquad   Y_{i_{train}}= F_{i}(x_{train})       \qquad   \qquad Y_{1_{test}}= F_{1}(x_{test})$$
We repeat the process of the first experiment to train and test the various models. The results of the two experiments are found in Figures 5-14 in the appendix. 
\begin{figure}[t]
\centering
\includegraphics[width=0.9\textwidth]{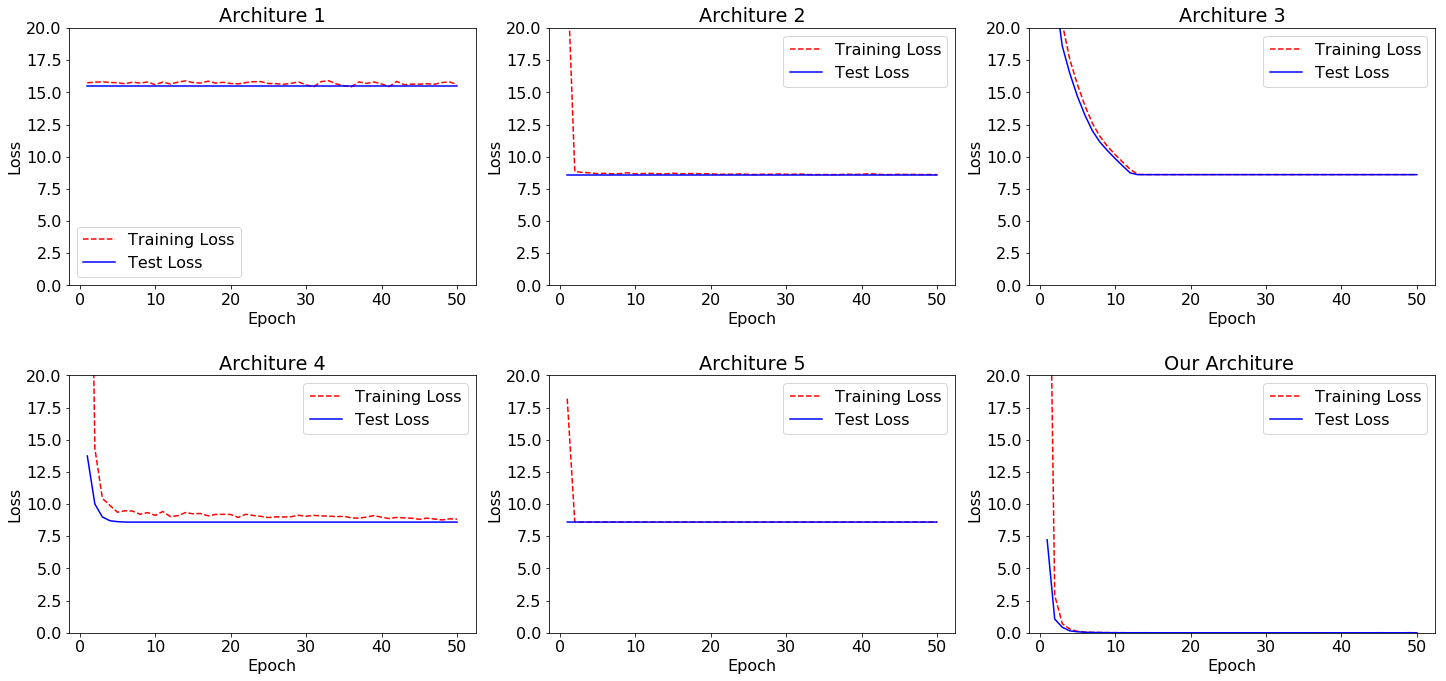}
\caption{ $F_{1} = \frac{1}{2}x^{2}-  5\left(\frac{\mathrm{1} }{\mathrm{1} + e^{x} }\right) $ Experiment 1 shape }
\label{fig:Graph2}
\end{figure}
\begin{figure}[h!]
\centering
\includegraphics[width=0.9\textwidth]{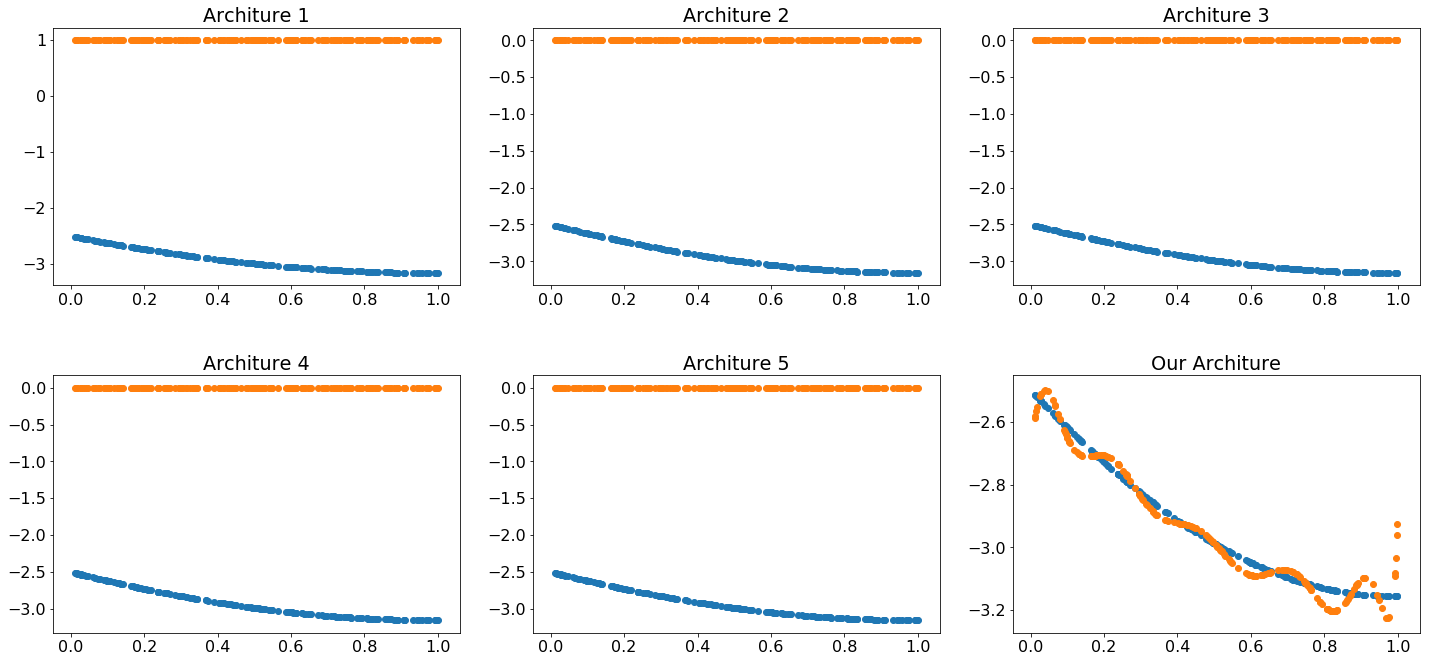}
\caption{ $F_{1} = \frac{1}{2}x^{2}-  5\left(\frac{\mathrm{1} }{\mathrm{1} + e^{x} }\right) $ Experiment 1 shape }
\label{fig:Graph3}
\end{figure}
%%%%%%%%%%%%%%%%%%%%%%%%%%%%%%%%%%%%%%%%%%%
\subsection{MNIST Handwriting Data Set}

For our final experiment we ran SWAG on the  MNIST hand writing  data set \cite{lecun-mnisthandwrittendigit-2010}. The data set is composed of a total of 70,000 images, all of which are unique hand writing samples of the numbers 0-9. We flattened these images into vectors of size $(784,1)$ and used these as inputs to a traditional DNN, as well as SWAG. The traditional DNN had three hidden layer. In the first and second layers we used $ReLU$ as the activation function with 1024 neurons in each layer. For the third layer we used $Softmax$ as the activation function with 10 neurons (code in appendix). For our implementation of SWAG, we used $l=500$, $k=7$ and 2 layers. We used a training set that consisted of 60,000 images, and a test set that consisted of 10,000 images. 
In the traditional method we got a test accuracy of $0.9767$ after 4 epochs. SWAG achieved $0.9787$ test accuracy after 4 epochs. The results are shown in the appendix in Figure 15 and Figure 16.

\section{ Discussion }

In this work, we introduced a set of activation functions and a new architecture. We named this architecture SWAG. 
The first layer of our architecture has at least $k$ neurons where $k$ is the degree of the  polynomial for estimation of the function $g(x)$ such that $g(x_j)=y_j$ for all $1\leq j\leq n$; this layer has $k$ different activation functions $\{\frac{x^p}{p!}\}_{p=1}^k$. The second layer is a fully connected layer with a linear activation function. To add additional layers the pattern of the first 2 layers is repeated. By using the back propagation algorithm we can find the set of weights that optimize the predictions.

We created a random data set with highly complicated nonlinear functions. We evaluated the effectiveness of SWAG and found that it was able to approximate the functions better than traditional deep learning methods; it also converged faster. Finally we tested SWAG on the MNIST handwriting data set. Our method was able to replicate the state of the art in fully-connected architectures while converging in only 4 epochs. 

We note that there are many basis sets that are able to estimate a function with arbitrary accuracy.  In future work, it will be important to compare the performance of different basis sets and function approximations to determine which one has better performance in specific situations. Our conjecture is that orthogonal basis will provide an advantage in some cases. Another interesting question to be pursued is to find a way to set the initial weights of the system more effectively. We believe that a Taylor estimation of our data set will increase the performance of SWAG after initialization. 

In addition we find the question of how to implement this architecture in convolutional and recursive neural networks especially interesting. Convolutional neural networks have surpassed the accuracy achieved with fully connected neural networks on the MNIST data set, and also reduce the number of parameters that are necessary to train a fully connected neural network. We also reduced the number of parameters in a fully connected network, but we were not able to surpass the state of the art in convolutional neural networks with our current implementation. We hypothesize that implementing the SWAG framework into convolutional and Recursive neural networks will allow us to further reduce parameters, make our model converge even faster and get a better accuracy then that which is currently possible.

\newcommand{\etalchar}[1]{$^{#1}$}

%%%%%%%%%%%%%%%%%%%%%%%%%%%%%%%%%%%%%%%%%%%%%%%%%%%%%%%%%%%%%%%%%%%%%%%%%%%%%%%%%%%%%%%%%%%%%%%
\newpage
%\bibliographystyle{alpha}
%\bibliography{sample}
\section{Appendix}
%%%%%%%%%%%%%%%%%%%%%%%%%%%%%%%%%%%%%%%%%%%%%%%%%%%%%%%%%%%%%%%%%%%%%%%%%%%%%%%%%%%%%%%%%%%%%%%

\begin{figure}[h!]
\centering
\includegraphics[width=0.9\textwidth]{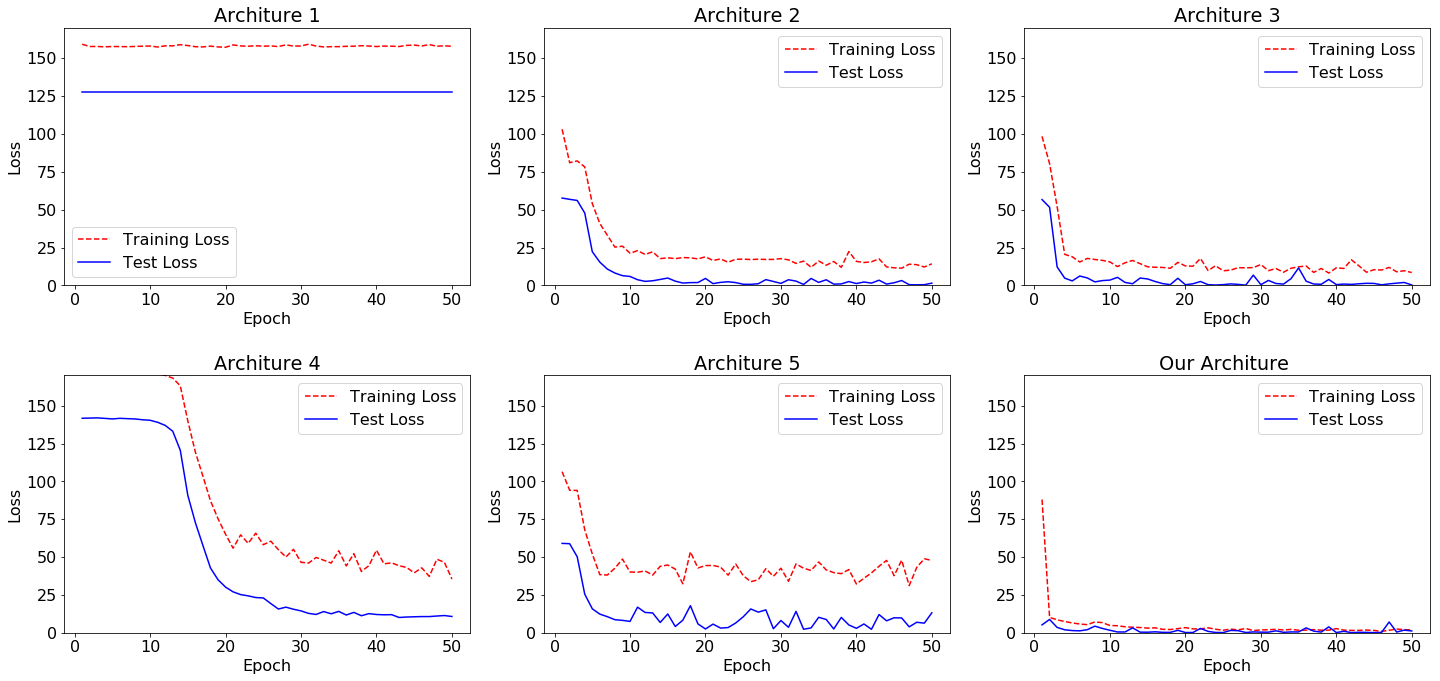}
\caption{\label{fig:Graph} $F_{2} =6 x^{5}- 3 \left(\frac{\mathrm{1} }{\mathrm{1} + e^{x} } \right) +e^{x} - 9 \ {\log_{10} (x)}$ Experiment 1 loss }
\end{figure}

\begin{figure}[h!]
\centering
\includegraphics[width=0.9\textwidth]{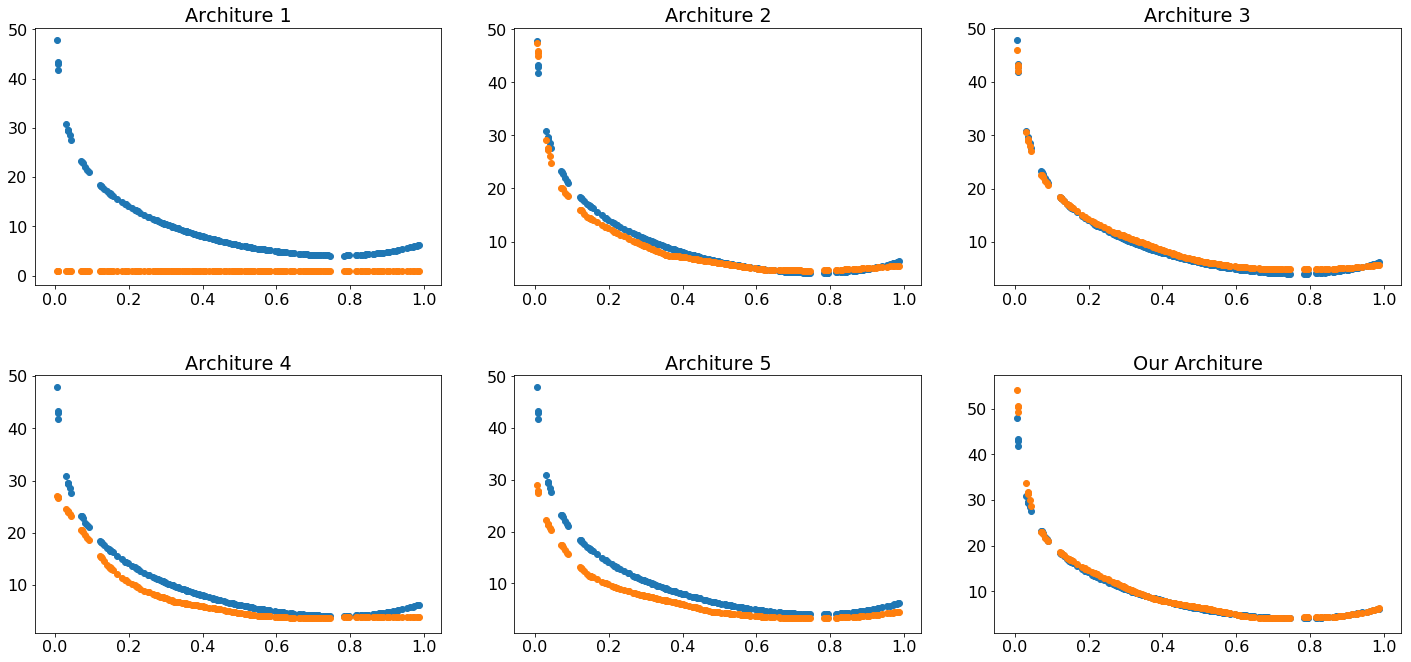}
\caption{\label{fig:Graph} $F_{2} =6 x^{5}- 3 \left(\frac{\mathrm{1} }{\mathrm{1} + e^{x} } \right) +e^{x} - 9 \ {\log_{10} (x)}$ Experiment 1 shape }
\end{figure}

\begin{figure}[h!]
\centering
\includegraphics[width=0.9\textwidth]{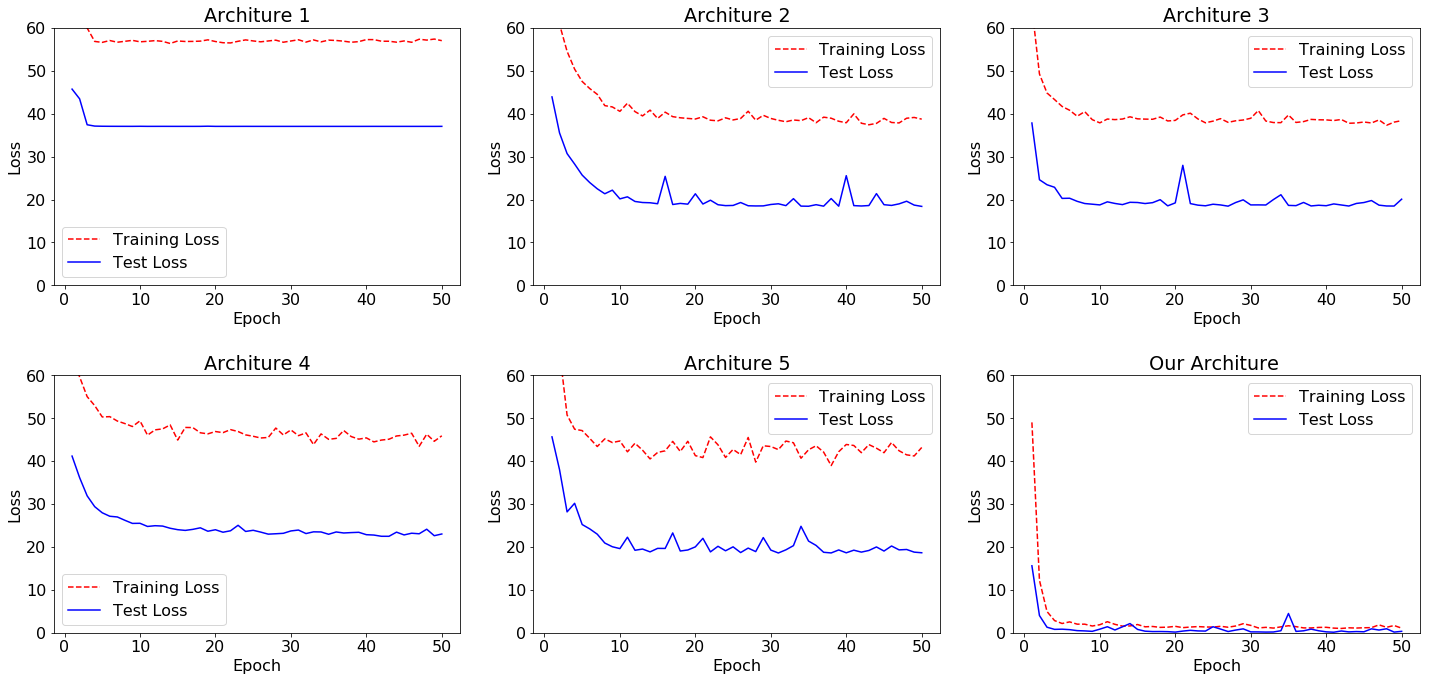}
\caption{\label{fig:Graph} $F_{3} =22  x^{20}-  \frac{\mathrm{1} }{\mathrm{1} + e^{x} } + 2 \ e^{x}+5 \ {\log_{10} (x)}$ Experiment 1 loss }
\end{figure}

\begin{figure}[h!]
\centering
\includegraphics[width=0.9\textwidth]{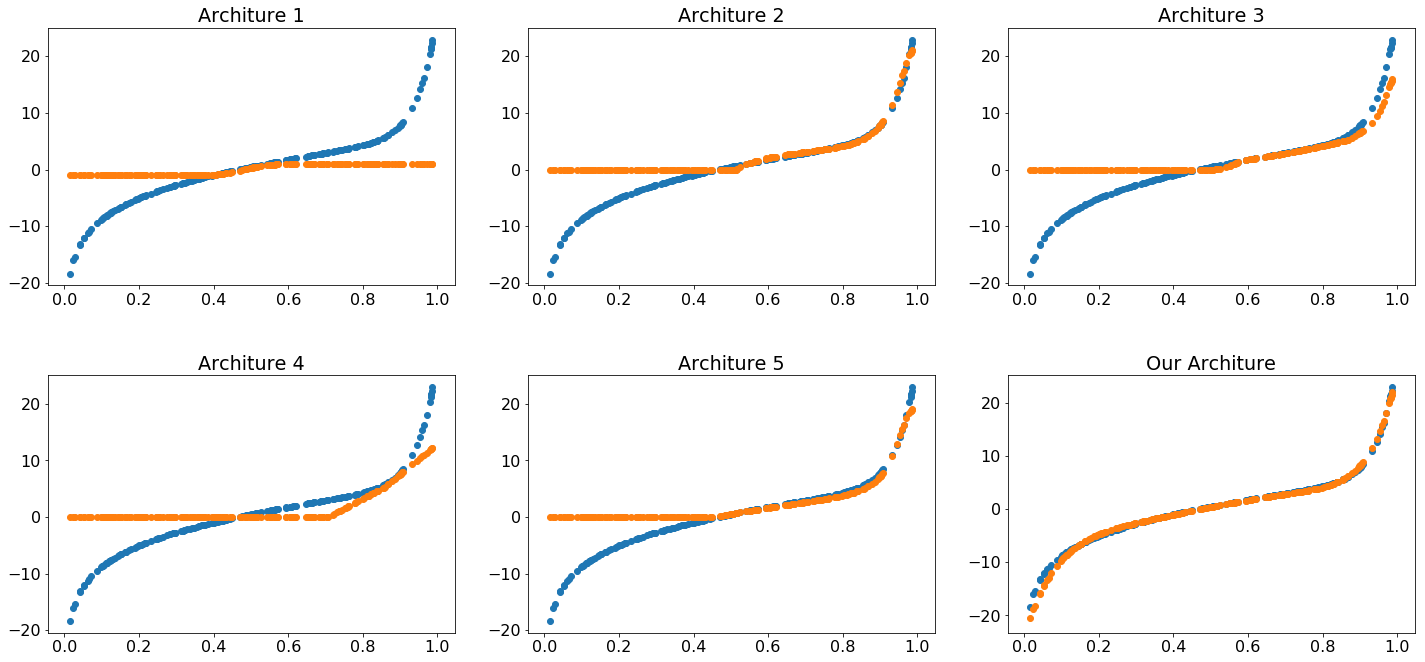}
\caption{\label{fig:Graph} $F_{3} =22  x^{20}-  \frac{\mathrm{1} }{\mathrm{1} + e^{x} } + 2 \ e^{x}+5 \ {\log_{10} (x)}$ Experiment 1 shape}
\end{figure}

\begin{figure}[h!]
\centering
\includegraphics[width=0.9\textwidth]{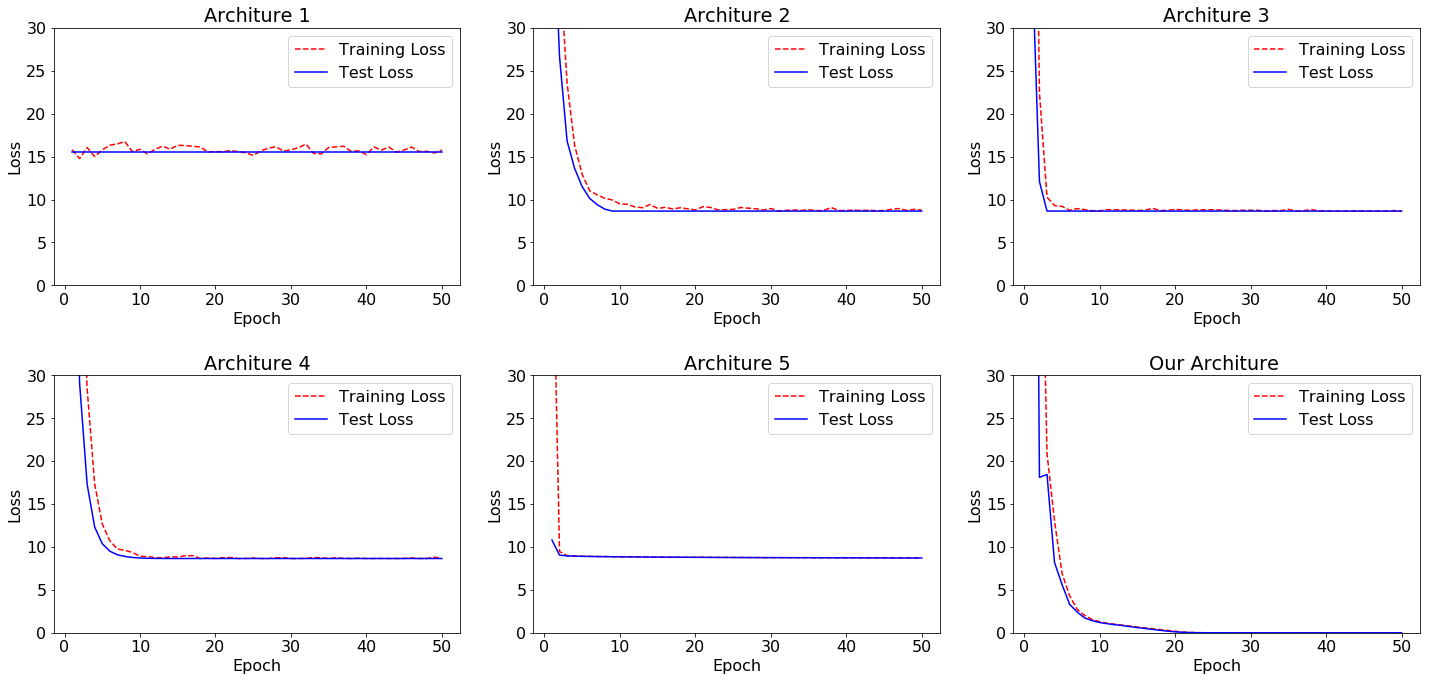}
\caption{\label{fig:Graph} $F_{1} = \frac{1}{2}x^{2}-  5\left(\frac{\mathrm{1} }{\mathrm{1} + e^{x} }\right)$ Experiment 2 loss }
\end{figure}

\begin{figure}[h!]
\centering
\includegraphics[width=0.9\textwidth]{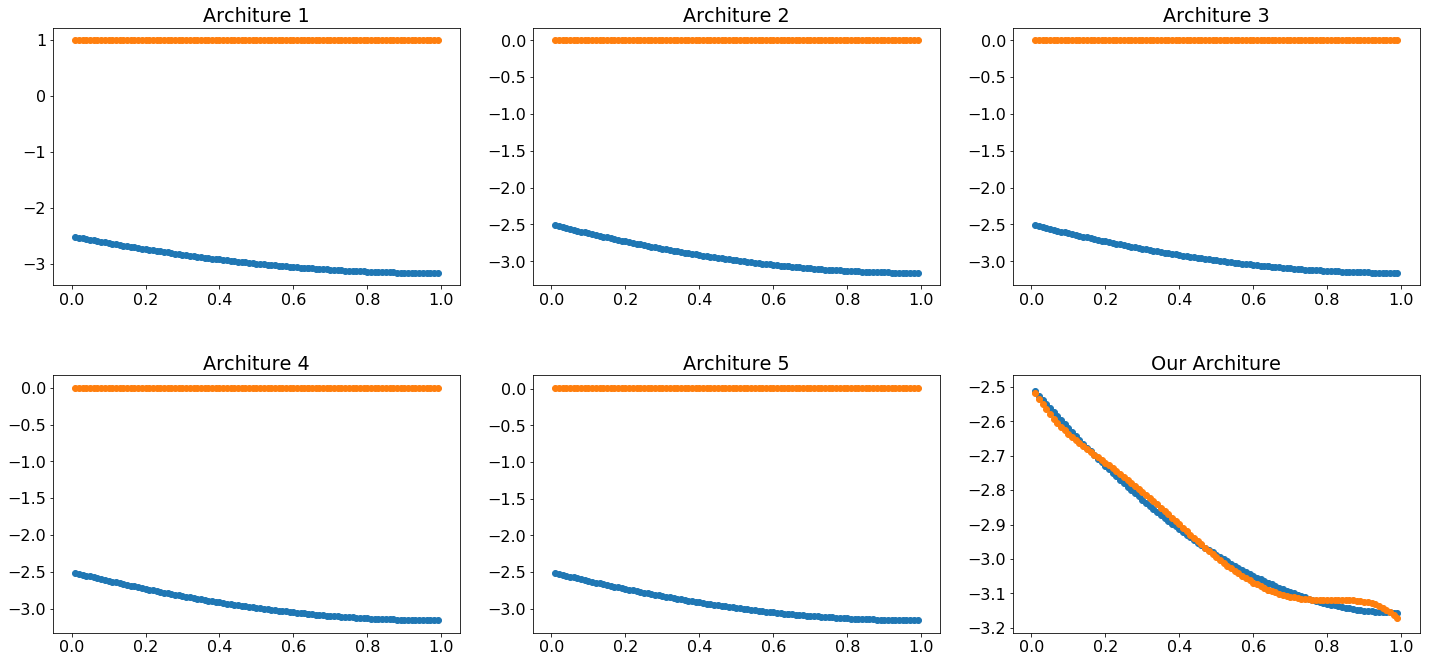}
\caption{\label{fig:Graph} $F_{1} = \frac{1}{2}x^{2}-  5\left(\frac{\mathrm{1} }{\mathrm{1} + e^{x} }\right)$ Experiment 2 shape }
\end{figure}

\begin{figure}[h!]
\centering
\includegraphics[width=0.9\textwidth]{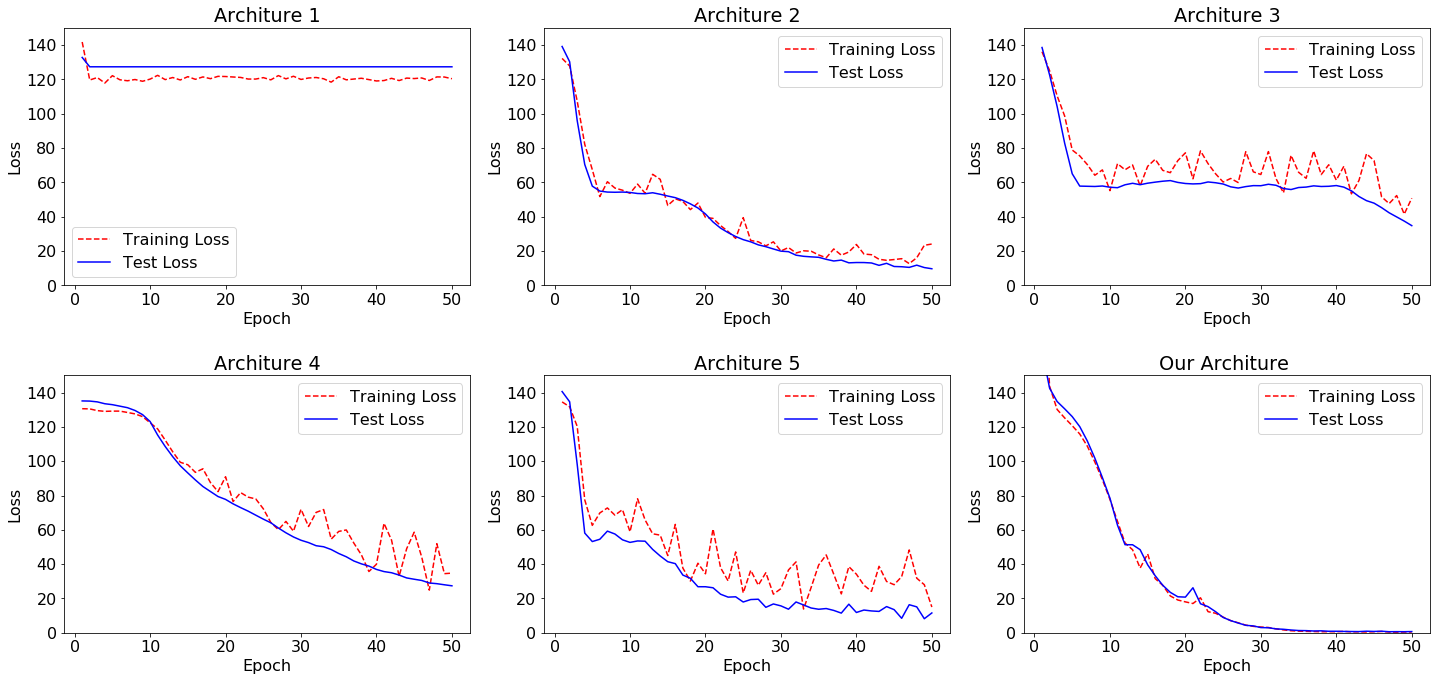}
\caption{\label{fig:Graph} $F_{2} =6 x^{5}- 3 \left(\frac{\mathrm{1} }{\mathrm{1} + e^{x} } \right) +e^{x} - 9 \ {\log_{10} (x)}$ Experiment 2 loss }
\end{figure}

\begin{figure}[h!]
\centering
\includegraphics[width=0.9\textwidth]{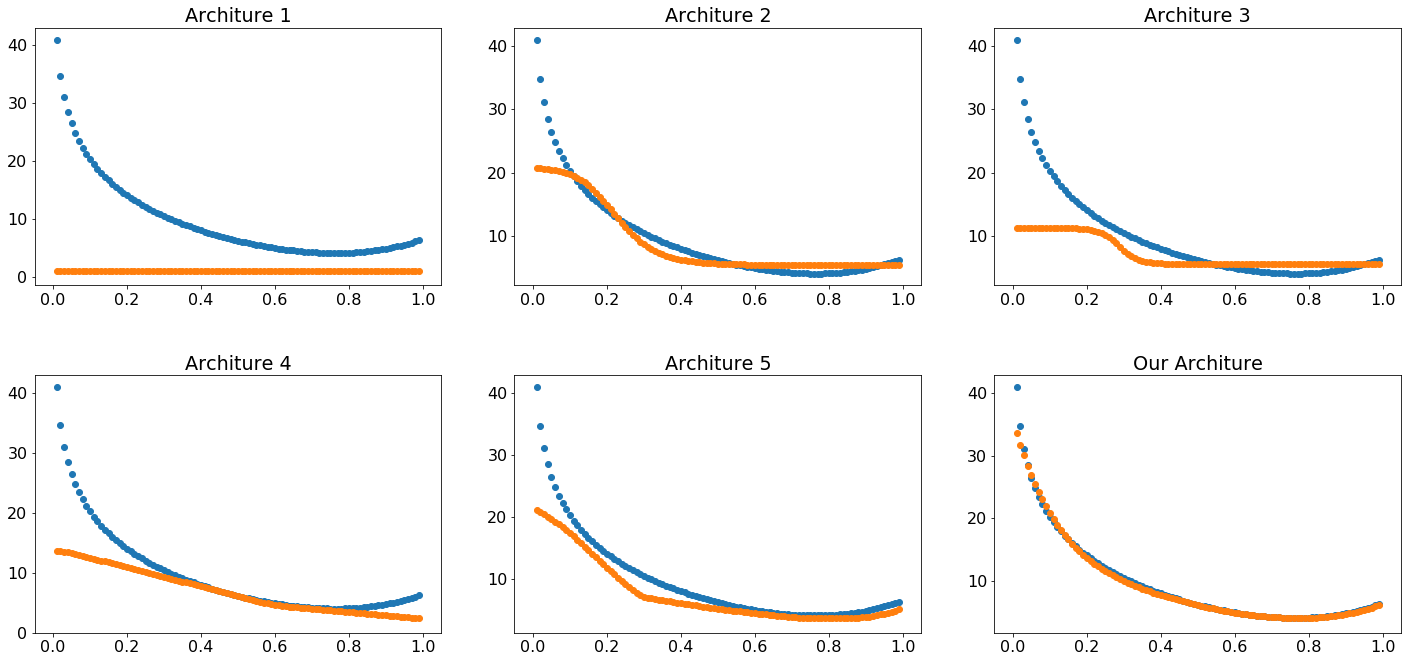}
\caption{\label{fig:Graph} $F_{2} =6 x^{5}- 3 \left(\frac{\mathrm{1} }{\mathrm{1} + e^{x} } \right) +e^{x} - 9 \ {\log_{10} (x)}$ Experiment 2 shape}
\end{figure}

\begin{figure}[h!]
\centering
\includegraphics[width=0.9\textwidth]{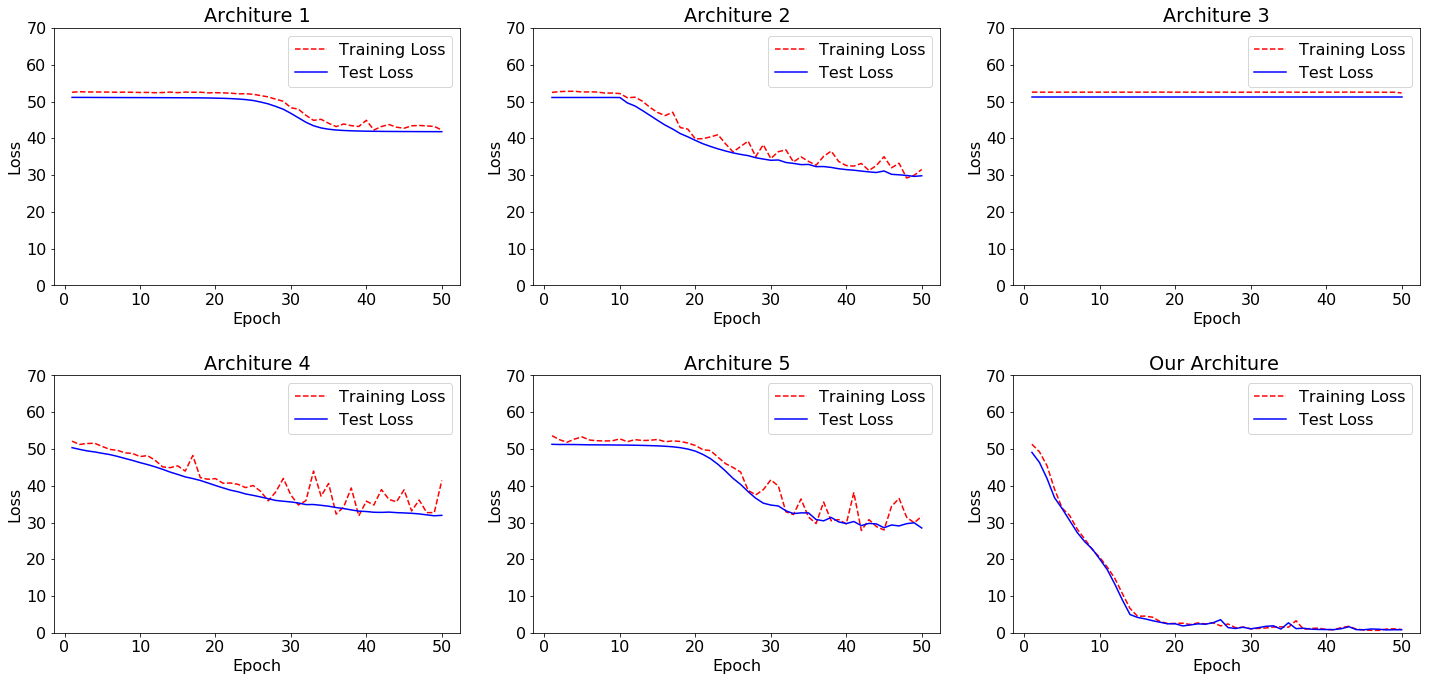}
\caption{\label{fig:Graph} $F_{3} =22  x^{20}-  \frac{\mathrm{1} }{\mathrm{1} + e^{x} } + 2 \ e^{x}+5 \ {\log_{10} (x)}$ Experiment 2 loss }
\end{figure}

\begin{figure}[h!]
\centering
\includegraphics[width=0.9\textwidth]{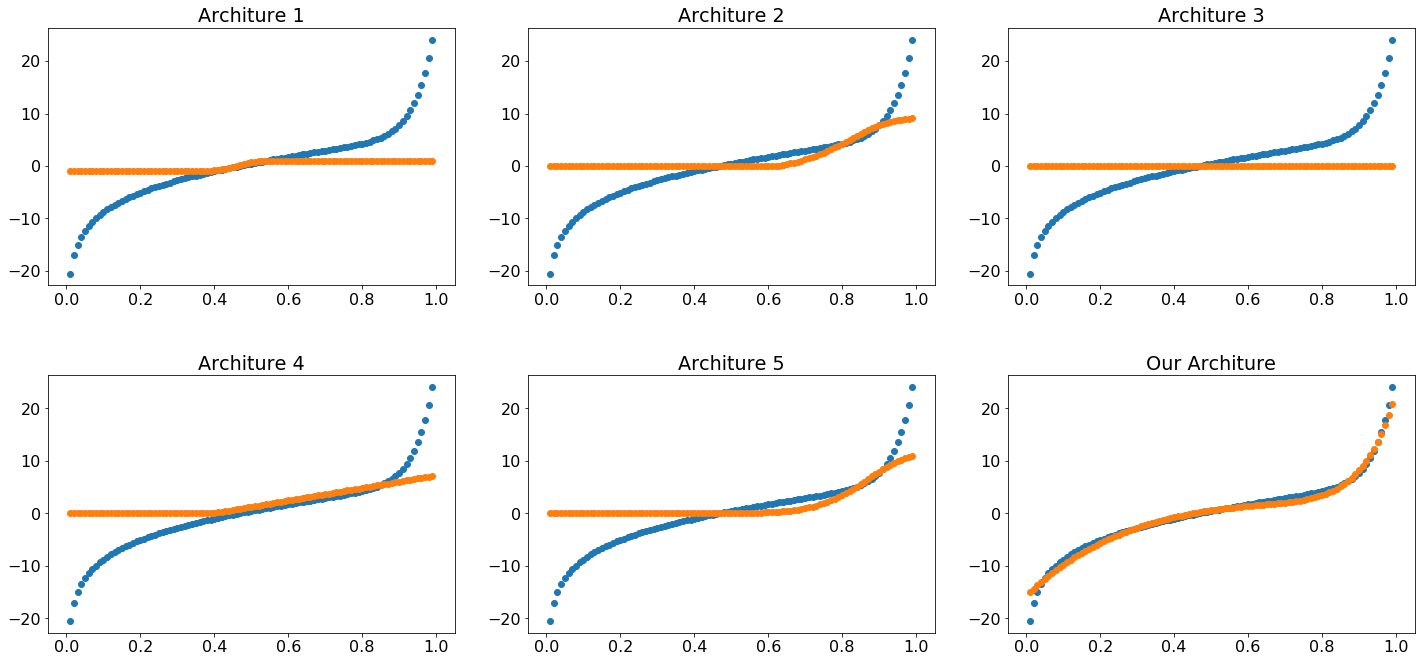}
\caption{\label{fig:Graph} $F_{3} =22  x^{20}-  \frac{\mathrm{1} }{\mathrm{1} + e^{x} } + 2 \ e^{x}+5 \ {\log_{10} (x)}$ Experiment 2 shape}
\end{figure}

\begin{figure}[h!]
\centering
\includegraphics[width=0.5\textwidth]{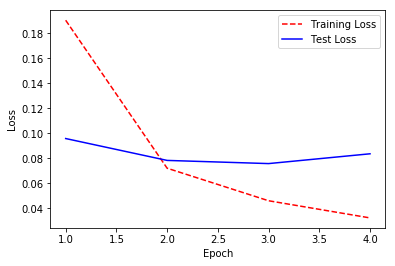}
\caption{\label{fig:2} Traditional Deep Learning on MNIST Data Set. Test loss: 0.08366 Test accuracy: 0.9767 }
\end{figure}

\begin{figure}[h!]
\centering
\includegraphics[width=0.5\textwidth]{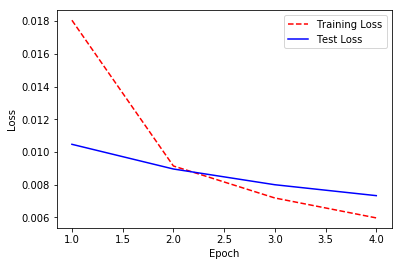}
\caption{\label{fig:graph} SWAG on MNIST Data Set. Test loss: 0.07297 Test accuracy: 0.9787 }
\end{figure}
\newpage

\include{python_code}

\subsection{Source Code}
All the source code is available at the following link:  \url{https://github.com/DeepLearningSaeid/New-Type-of-Deep-Learning/}

\begin{Verbatim}[commandchars=\\\{\}]
{\color{incolor}In [{\color{incolor}23}]:} \PY{n}{model} \PY{o}{=} \PY{n}{Sequential}\PY{p}{(}\PY{p}{)}
         \PY{n}{model}\PY{o}{.}\PY{n}{add}\PY{p}{(}\PY{n}{Dense}\PY{p}{(}\PY{l+m+mi}{10}\PY{p}{,} \PY{n}{input\PYZus{}dim}\PY{o}{=}\PY{n}{input\PYZus{}dim}\PY{p}{,} \PY{n}{activation}\PY{o}{=}\PY{l+s+s1}{\PYZsq{}}\PY{l+s+s1}{relu}\PY{l+s+s1}{\PYZsq{}}\PY{p}{)}\PY{p}{)}
         \PY{n}{model}\PY{o}{.}\PY{n}{add}\PY{p}{(}\PY{n}{Dense}\PY{p}{(}\PY{l+m+mi}{20}\PY{p}{,} \PY{n}{activation}\PY{o}{=}\PY{l+s+s1}{\PYZsq{}}\PY{l+s+s1}{sigmoid}\PY{l+s+s1}{\PYZsq{}}\PY{p}{)}\PY{p}{)}
         \PY{n}{model}\PY{o}{.}\PY{n}{add}\PY{p}{(}\PY{n}{Dense}\PY{p}{(}\PY{l+m+mi}{30}\PY{p}{,} \PY{n}{activation}\PY{o}{=}\PY{l+s+s1}{\PYZsq{}}\PY{l+s+s1}{tanh}\PY{l+s+s1}{\PYZsq{}}\PY{p}{)}\PY{p}{)}
         \PY{n}{model}\PY{o}{.}\PY{n}{add}\PY{p}{(}\PY{n}{Dense}\PY{p}{(}\PY{l+m+mi}{20}\PY{p}{,} \PY{n}{activation}\PY{o}{=}\PY{l+s+s1}{\PYZsq{}}\PY{l+s+s1}{relu}\PY{l+s+s1}{\PYZsq{}}\PY{p}{)}\PY{p}{)}
         \PY{n}{model}\PY{o}{.}\PY{n}{add}\PY{p}{(}\PY{n}{Dense}\PY{p}{(}\PY{l+m+mi}{15}\PY{p}{,} \PY{n}{activation}\PY{o}{=}\PY{l+s+s1}{\PYZsq{}}\PY{l+s+s1}{sigmoid}\PY{l+s+s1}{\PYZsq{}}\PY{p}{)}\PY{p}{)}
         \PY{n}{model}\PY{o}{.}\PY{n}{add}\PY{p}{(}\PY{n}{Dense}\PY{p}{(}\PY{l+m+mi}{25}\PY{p}{,} \PY{n}{activation}\PY{o}{=}\PY{l+s+s1}{\PYZsq{}}\PY{l+s+s1}{relu}\PY{l+s+s1}{\PYZsq{}}\PY{p}{)}\PY{p}{)}
         \PY{n}{model}\PY{o}{.}\PY{n}{add}\PY{p}{(}\PY{n}{Dense}\PY{p}{(}\PY{l+m+mi}{10}\PY{p}{,} \PY{n}{activation}\PY{o}{=}\PY{l+s+s1}{\PYZsq{}}\PY{l+s+s1}{relu}\PY{l+s+s1}{\PYZsq{}}\PY{p}{)}\PY{p}{)}
         \PY{n}{model}\PY{o}{.}\PY{n}{add}\PY{p}{(}\PY{n}{Dense}\PY{p}{(}\PY{n}{output\PYZus{}dim}\PY{p}{,} \PY{n}{activation}\PY{o}{=}\PY{l+s+s1}{\PYZsq{}}\PY{l+s+s1}{tanh}\PY{l+s+s1}{\PYZsq{}}\PY{p}{)}\PY{p}{)}
         \PY{n}{model}\PY{o}{.}\PY{n}{add}\PY{p}{(}\PY{n}{Dropout}\PY{p}{(}\PY{l+m+mf}{0.2}\PY{p}{)}\PY{p}{)}
         \PY{n}{model}\PY{o}{.}\PY{n}{summary}\PY{p}{(}\PY{p}{)}
         \PY{n}{model}\PY{o}{.}\PY{n}{compile}\PY{p}{(}\PY{n}{loss}\PY{o}{=}\PY{l+s+s1}{\PYZsq{}}\PY{l+s+s1}{mean\PYZus{}squared\PYZus{}error}\PY{l+s+s1}{\PYZsq{}}\PY{p}{,} \PY{n}{optimizer}\PY{o}{=}\PY{l+s+s1}{\PYZsq{}}\PY{l+s+s1}{adam}\PY{l+s+s1}{\PYZsq{}}\PY{p}{)}
         \PY{n}{model}\PY{o}{.}\PY{n}{fit}\PY{p}{(}\PY{n}{train\PYZus{}x}\PY{p}{,}\PY{n}{train\PYZus{}y}\PY{p}{,}\PY{n}{epochs}\PY{o}{=}\PY{n}{number\PYZus{}epo}\PY{p}{,}\PY{n}{verbose}\PY{o}{=}\PY{l+m+mi}{0}\PY{p}{,}\PY{n}{batch\PYZus{}size}\PY{o}{=}\PY{l+m+mi}{10}\PY{p}{,}
                   \PY{n}{validation\PYZus{}data}\PY{o}{=}\PY{p}{(}\PY{n}{test\PYZus{}x}\PY{p}{,} \PY{n}{test\PYZus{}y}\PY{p}{)}\PY{p}{)} 
\end{Verbatim}

    \begin{Verbatim}[commandchars=\\\{\}]
\_\_\_\_\_\_\_\_\_\_\_\_\_\_\_\_\_\_\_\_\_\_\_\_\_\_\_\_\_\_\_\_\_\_\_\_\_\_\_\_\_\_\_\_\_\_\_\_\_\_\_\_\_\_\_\_\_\_\_\_\_\_\_\_\_
Layer (type)                 Output Shape              Param \#   
=================================================================
dense\_132 (Dense)            (None, 10)                20        
\_\_\_\_\_\_\_\_\_\_\_\_\_\_\_\_\_\_\_\_\_\_\_\_\_\_\_\_\_\_\_\_\_\_\_\_\_\_\_\_\_\_\_\_\_\_\_\_\_\_\_\_\_\_\_\_\_\_\_\_\_\_\_\_\_
dense\_133 (Dense)            (None, 20)                220       
\_\_\_\_\_\_\_\_\_\_\_\_\_\_\_\_\_\_\_\_\_\_\_\_\_\_\_\_\_\_\_\_\_\_\_\_\_\_\_\_\_\_\_\_\_\_\_\_\_\_\_\_\_\_\_\_\_\_\_\_\_\_\_\_\_
dense\_134 (Dense)            (None, 30)                630       
\_\_\_\_\_\_\_\_\_\_\_\_\_\_\_\_\_\_\_\_\_\_\_\_\_\_\_\_\_\_\_\_\_\_\_\_\_\_\_\_\_\_\_\_\_\_\_\_\_\_\_\_\_\_\_\_\_\_\_\_\_\_\_\_\_
dense\_135 (Dense)            (None, 20)                620       
\_\_\_\_\_\_\_\_\_\_\_\_\_\_\_\_\_\_\_\_\_\_\_\_\_\_\_\_\_\_\_\_\_\_\_\_\_\_\_\_\_\_\_\_\_\_\_\_\_\_\_\_\_\_\_\_\_\_\_\_\_\_\_\_\_
dense\_136 (Dense)            (None, 15)                315       
\_\_\_\_\_\_\_\_\_\_\_\_\_\_\_\_\_\_\_\_\_\_\_\_\_\_\_\_\_\_\_\_\_\_\_\_\_\_\_\_\_\_\_\_\_\_\_\_\_\_\_\_\_\_\_\_\_\_\_\_\_\_\_\_\_
dense\_137 (Dense)            (None, 25)                400       
\_\_\_\_\_\_\_\_\_\_\_\_\_\_\_\_\_\_\_\_\_\_\_\_\_\_\_\_\_\_\_\_\_\_\_\_\_\_\_\_\_\_\_\_\_\_\_\_\_\_\_\_\_\_\_\_\_\_\_\_\_\_\_\_\_
dense\_138 (Dense)            (None, 10)                260       
\_\_\_\_\_\_\_\_\_\_\_\_\_\_\_\_\_\_\_\_\_\_\_\_\_\_\_\_\_\_\_\_\_\_\_\_\_\_\_\_\_\_\_\_\_\_\_\_\_\_\_\_\_\_\_\_\_\_\_\_\_\_\_\_\_
dense\_139 (Dense)            (None, 1)                 11        
\_\_\_\_\_\_\_\_\_\_\_\_\_\_\_\_\_\_\_\_\_\_\_\_\_\_\_\_\_\_\_\_\_\_\_\_\_\_\_\_\_\_\_\_\_\_\_\_\_\_\_\_\_\_\_\_\_\_\_\_\_\_\_\_\_
dropout\_17 (Dropout)         (None, 1)                 0         
=================================================================
Total params: 2,476
Trainable params: 2,476
Non-trainable params: 0
\_\_\_\_\_\_\_\_\_\_\_\_\_\_\_\_\_\_\_\_\_\_\_\_\_\_\_\_\_\_\_\_\_\_\_\_\_\_\_\_\_\_\_\_\_\_\_\_\_\_\_\_\_\_\_\_\_\_\_\_\_\_\_\_\_
Run Time : 15.135070

    \end{Verbatim}

\begin{Verbatim}[commandchars=\\\{\}]
{\color{outcolor}Out[{\color{outcolor}23}]:} [<matplotlib.lines.Line2D at 0x2440d33ea20>]
\end{Verbatim}

    \begin{Verbatim}[commandchars=\\\{\}]
{\color{incolor}In [{\color{incolor}24}]:} \PY{n}{model} \PY{o}{=} \PY{n}{Sequential}\PY{p}{(}\PY{p}{)}
         \PY{n}{model}\PY{o}{.}\PY{n}{add}\PY{p}{(}\PY{n}{Dense}\PY{p}{(}\PY{l+m+mi}{5}\PY{p}{,} \PY{n}{input\PYZus{}dim}\PY{o}{=}\PY{n}{input\PYZus{}dim}\PY{p}{,} \PY{n}{activation}\PY{o}{=}\PY{l+s+s1}{\PYZsq{}}\PY{l+s+s1}{relu}\PY{l+s+s1}{\PYZsq{}}\PY{p}{)}\PY{p}{)}
         \PY{n}{model}\PY{o}{.}\PY{n}{add}\PY{p}{(}\PY{n}{Dense}\PY{p}{(}\PY{l+m+mi}{10}\PY{p}{,} \PY{n}{activation}\PY{o}{=}\PY{l+s+s1}{\PYZsq{}}\PY{l+s+s1}{relu}\PY{l+s+s1}{\PYZsq{}}\PY{p}{)}\PY{p}{)}
         \PY{n}{model}\PY{o}{.}\PY{n}{add}\PY{p}{(}\PY{n}{Dense}\PY{p}{(}\PY{l+m+mi}{50}\PY{p}{,} \PY{n}{activation}\PY{o}{=}\PY{l+s+s1}{\PYZsq{}}\PY{l+s+s1}{tanh}\PY{l+s+s1}{\PYZsq{}}\PY{p}{)}\PY{p}{)}
         \PY{n}{model}\PY{o}{.}\PY{n}{add}\PY{p}{(}\PY{n}{Dense}\PY{p}{(}\PY{l+m+mi}{18}\PY{p}{,} \PY{n}{activation}\PY{o}{=}\PY{l+s+s1}{\PYZsq{}}\PY{l+s+s1}{relu}\PY{l+s+s1}{\PYZsq{}}\PY{p}{)}\PY{p}{)}
         \PY{n}{model}\PY{o}{.}\PY{n}{add}\PY{p}{(}\PY{n}{Dense}\PY{p}{(}\PY{l+m+mi}{15}\PY{p}{,} \PY{n}{activation}\PY{o}{=}\PY{l+s+s1}{\PYZsq{}}\PY{l+s+s1}{tanh}\PY{l+s+s1}{\PYZsq{}}\PY{p}{)}\PY{p}{)}
         \PY{n}{model}\PY{o}{.}\PY{n}{add}\PY{p}{(}\PY{n}{Dense}\PY{p}{(}\PY{l+m+mi}{18}\PY{p}{,} \PY{n}{activation}\PY{o}{=}\PY{l+s+s1}{\PYZsq{}}\PY{l+s+s1}{sigmoid}\PY{l+s+s1}{\PYZsq{}}\PY{p}{)}\PY{p}{)}
         \PY{n}{model}\PY{o}{.}\PY{n}{add}\PY{p}{(}\PY{n}{Dropout}\PY{p}{(}\PY{l+m+mf}{0.2}\PY{p}{)}\PY{p}{)}
         \PY{n}{model}\PY{o}{.}\PY{n}{add}\PY{p}{(}\PY{n}{Dense}\PY{p}{(}\PY{l+m+mi}{8}\PY{p}{,} \PY{n}{activation}\PY{o}{=}\PY{l+s+s1}{\PYZsq{}}\PY{l+s+s1}{relu}\PY{l+s+s1}{\PYZsq{}}\PY{p}{)}\PY{p}{)}
         \PY{n}{model}\PY{o}{.}\PY{n}{add}\PY{p}{(}\PY{n}{Dropout}\PY{p}{(}\PY{l+m+mf}{0.2}\PY{p}{)}\PY{p}{)}
         \PY{n}{model}\PY{o}{.}\PY{n}{add}\PY{p}{(}\PY{n}{Dense}\PY{p}{(}\PY{n}{output\PYZus{}dim}\PY{p}{,} \PY{n}{activation}\PY{o}{=}\PY{l+s+s1}{\PYZsq{}}\PY{l+s+s1}{relu}\PY{l+s+s1}{\PYZsq{}}\PY{p}{)}\PY{p}{)}
         \PY{n}{model}\PY{o}{.}\PY{n}{summary}\PY{p}{(}\PY{p}{)}
         \PY{n}{model}\PY{o}{.}\PY{n}{compile}\PY{p}{(}\PY{n}{loss}\PY{o}{=}\PY{l+s+s1}{\PYZsq{}}\PY{l+s+s1}{mean\PYZus{}squared\PYZus{}error}\PY{l+s+s1}{\PYZsq{}}\PY{p}{,} \PY{n}{optimizer}\PY{o}{=}\PY{l+s+s1}{\PYZsq{}}\PY{l+s+s1}{adam}\PY{l+s+s1}{\PYZsq{}}\PY{p}{)}
         \PY{n}{model}\PY{o}{.}\PY{n}{fit}\PY{p}{(}\PY{n}{train\PYZus{}x}\PY{p}{,}\PY{n}{train\PYZus{}y}\PY{p}{,}\PY{n}{epochs}\PY{o}{=}\PY{n}{number\PYZus{}epo}\PY{p}{,}\PY{n}{verbose}\PY{o}{=}\PY{l+m+mi}{0}\PY{p}{,}\PY{n}{batch\PYZus{}size}\PY{o}{=}\PY{l+m+mi}{10}\PY{p}{,}
                   \PY{n}{validation\PYZus{}data}\PY{o}{=}\PY{p}{(}\PY{n}{test\PYZus{}x}\PY{p}{,} \PY{n}{test\PYZus{}y}\PY{p}{)}\PY{p}{)} 
\end{Verbatim}

    \begin{Verbatim}[commandchars=\\\{\}]
\_\_\_\_\_\_\_\_\_\_\_\_\_\_\_\_\_\_\_\_\_\_\_\_\_\_\_\_\_\_\_\_\_\_\_\_\_\_\_\_\_\_\_\_\_\_\_\_\_\_\_\_\_\_\_\_\_\_\_\_\_\_\_\_\_
Layer (type)                 Output Shape              Param \#   
=================================================================
dense\_140 (Dense)            (None, 5)                 10        
\_\_\_\_\_\_\_\_\_\_\_\_\_\_\_\_\_\_\_\_\_\_\_\_\_\_\_\_\_\_\_\_\_\_\_\_\_\_\_\_\_\_\_\_\_\_\_\_\_\_\_\_\_\_\_\_\_\_\_\_\_\_\_\_\_
dense\_141 (Dense)            (None, 10)                60        
\_\_\_\_\_\_\_\_\_\_\_\_\_\_\_\_\_\_\_\_\_\_\_\_\_\_\_\_\_\_\_\_\_\_\_\_\_\_\_\_\_\_\_\_\_\_\_\_\_\_\_\_\_\_\_\_\_\_\_\_\_\_\_\_\_
dense\_142 (Dense)            (None, 50)                550       
\_\_\_\_\_\_\_\_\_\_\_\_\_\_\_\_\_\_\_\_\_\_\_\_\_\_\_\_\_\_\_\_\_\_\_\_\_\_\_\_\_\_\_\_\_\_\_\_\_\_\_\_\_\_\_\_\_\_\_\_\_\_\_\_\_
dense\_143 (Dense)            (None, 18)                918       
\_\_\_\_\_\_\_\_\_\_\_\_\_\_\_\_\_\_\_\_\_\_\_\_\_\_\_\_\_\_\_\_\_\_\_\_\_\_\_\_\_\_\_\_\_\_\_\_\_\_\_\_\_\_\_\_\_\_\_\_\_\_\_\_\_
dense\_144 (Dense)            (None, 15)                285       
\_\_\_\_\_\_\_\_\_\_\_\_\_\_\_\_\_\_\_\_\_\_\_\_\_\_\_\_\_\_\_\_\_\_\_\_\_\_\_\_\_\_\_\_\_\_\_\_\_\_\_\_\_\_\_\_\_\_\_\_\_\_\_\_\_
dense\_145 (Dense)            (None, 18)                288       
\_\_\_\_\_\_\_\_\_\_\_\_\_\_\_\_\_\_\_\_\_\_\_\_\_\_\_\_\_\_\_\_\_\_\_\_\_\_\_\_\_\_\_\_\_\_\_\_\_\_\_\_\_\_\_\_\_\_\_\_\_\_\_\_\_
dropout\_18 (Dropout)         (None, 18)                0         
\_\_\_\_\_\_\_\_\_\_\_\_\_\_\_\_\_\_\_\_\_\_\_\_\_\_\_\_\_\_\_\_\_\_\_\_\_\_\_\_\_\_\_\_\_\_\_\_\_\_\_\_\_\_\_\_\_\_\_\_\_\_\_\_\_
dense\_146 (Dense)            (None, 8)                 152       
\_\_\_\_\_\_\_\_\_\_\_\_\_\_\_\_\_\_\_\_\_\_\_\_\_\_\_\_\_\_\_\_\_\_\_\_\_\_\_\_\_\_\_\_\_\_\_\_\_\_\_\_\_\_\_\_\_\_\_\_\_\_\_\_\_
dropout\_19 (Dropout)         (None, 8)                 0         
\_\_\_\_\_\_\_\_\_\_\_\_\_\_\_\_\_\_\_\_\_\_\_\_\_\_\_\_\_\_\_\_\_\_\_\_\_\_\_\_\_\_\_\_\_\_\_\_\_\_\_\_\_\_\_\_\_\_\_\_\_\_\_\_\_
dense\_147 (Dense)            (None, 1)                 9         
=================================================================
Total params: 2,272
Trainable params: 2,272
Non-trainable params: 0
\_\_\_\_\_\_\_\_\_\_\_\_\_\_\_\_\_\_\_\_\_\_\_\_\_\_\_\_\_\_\_\_\_\_\_\_\_\_\_\_\_\_\_\_\_\_\_\_\_\_\_\_\_\_\_\_\_\_\_\_\_\_\_\_\_
Run Time : 14.923352

    \end{Verbatim}

\begin{Verbatim}[commandchars=\\\{\}]
{\color{outcolor}Out[{\color{outcolor}24}]:} [<matplotlib.lines.Line2D at 0x2441d9ba160>]
\end{Verbatim}

    \begin{Verbatim}[commandchars=\\\{\}]
{\color{incolor}In [{\color{incolor}25}]:} \PY{n}{model} \PY{o}{=} \PY{n}{Sequential}\PY{p}{(}\PY{p}{)}
         \PY{n}{model}\PY{o}{.}\PY{n}{add}\PY{p}{(}\PY{n}{Dense}\PY{p}{(}\PY{l+m+mi}{5}\PY{p}{,} \PY{n}{input\PYZus{}dim}\PY{o}{=}\PY{n}{input\PYZus{}dim}\PY{p}{,} \PY{n}{activation}\PY{o}{=}\PY{l+s+s1}{\PYZsq{}}\PY{l+s+s1}{relu}\PY{l+s+s1}{\PYZsq{}}\PY{p}{)}\PY{p}{)}
         \PY{n}{model}\PY{o}{.}\PY{n}{add}\PY{p}{(}\PY{n}{Dense}\PY{p}{(}\PY{l+m+mi}{10}\PY{p}{,} \PY{n}{activation}\PY{o}{=}\PY{l+s+s1}{\PYZsq{}}\PY{l+s+s1}{relu}\PY{l+s+s1}{\PYZsq{}}\PY{p}{)}\PY{p}{)}
         \PY{n}{model}\PY{o}{.}\PY{n}{add}\PY{p}{(}\PY{n}{Dense}\PY{p}{(}\PY{l+m+mi}{20}\PY{p}{,} \PY{n}{activation}\PY{o}{=}\PY{l+s+s1}{\PYZsq{}}\PY{l+s+s1}{tanh}\PY{l+s+s1}{\PYZsq{}}\PY{p}{)}\PY{p}{)}
         \PY{n}{model}\PY{o}{.}\PY{n}{add}\PY{p}{(}\PY{n}{Dense}\PY{p}{(}\PY{l+m+mi}{15}\PY{p}{,} \PY{n}{activation}\PY{o}{=}\PY{l+s+s1}{\PYZsq{}}\PY{l+s+s1}{relu}\PY{l+s+s1}{\PYZsq{}}\PY{p}{)}\PY{p}{)}
         \PY{n}{model}\PY{o}{.}\PY{n}{add}\PY{p}{(}\PY{n}{Dense}\PY{p}{(}\PY{l+m+mi}{25}\PY{p}{,} \PY{n}{activation}\PY{o}{=}\PY{l+s+s1}{\PYZsq{}}\PY{l+s+s1}{tanh}\PY{l+s+s1}{\PYZsq{}}\PY{p}{)}\PY{p}{)}
         \PY{n}{model}\PY{o}{.}\PY{n}{add}\PY{p}{(}\PY{n}{Dense}\PY{p}{(}\PY{l+m+mi}{20}\PY{p}{,} \PY{n}{activation}\PY{o}{=}\PY{l+s+s1}{\PYZsq{}}\PY{l+s+s1}{sigmoid}\PY{l+s+s1}{\PYZsq{}}\PY{p}{)}\PY{p}{)}
         \PY{n}{model}\PY{o}{.}\PY{n}{add}\PY{p}{(}\PY{n}{Dense}\PY{p}{(}\PY{l+m+mi}{25}\PY{p}{,} \PY{n}{activation}\PY{o}{=}\PY{l+s+s1}{\PYZsq{}}\PY{l+s+s1}{relu}\PY{l+s+s1}{\PYZsq{}}\PY{p}{)}\PY{p}{)}
         \PY{n}{model}\PY{o}{.}\PY{n}{add}\PY{p}{(}\PY{n}{Dense}\PY{p}{(}\PY{l+m+mi}{20}\PY{p}{,} \PY{n}{activation}\PY{o}{=}\PY{l+s+s1}{\PYZsq{}}\PY{l+s+s1}{relu}\PY{l+s+s1}{\PYZsq{}}\PY{p}{)}\PY{p}{)}
         \PY{n}{model}\PY{o}{.}\PY{n}{add}\PY{p}{(}\PY{n}{Dropout}\PY{p}{(}\PY{l+m+mf}{0.2}\PY{p}{)}\PY{p}{)}
         \PY{n}{model}\PY{o}{.}\PY{n}{add}\PY{p}{(}\PY{n}{Dense}\PY{p}{(}\PY{l+m+mi}{8}\PY{p}{,} \PY{n}{activation}\PY{o}{=}\PY{l+s+s1}{\PYZsq{}}\PY{l+s+s1}{relu}\PY{l+s+s1}{\PYZsq{}}\PY{p}{)}\PY{p}{)}
         \PY{n}{model}\PY{o}{.}\PY{n}{add}\PY{p}{(}\PY{n}{Dropout}\PY{p}{(}\PY{l+m+mf}{0.2}\PY{p}{)}\PY{p}{)}
         \PY{n}{model}\PY{o}{.}\PY{n}{add}\PY{p}{(}\PY{n}{Dense}\PY{p}{(}\PY{n}{output\PYZus{}dim}\PY{p}{,} \PY{n}{activation}\PY{o}{=}\PY{l+s+s1}{\PYZsq{}}\PY{l+s+s1}{relu}\PY{l+s+s1}{\PYZsq{}}\PY{p}{)}\PY{p}{)}
         \PY{n}{model}\PY{o}{.}\PY{n}{summary}\PY{p}{(}\PY{p}{)}
         \PY{n}{model}\PY{o}{.}\PY{n}{compile}\PY{p}{(}\PY{n}{loss}\PY{o}{=}\PY{l+s+s1}{\PYZsq{}}\PY{l+s+s1}{mean\PYZus{}squared\PYZus{}error}\PY{l+s+s1}{\PYZsq{}}\PY{p}{,} \PY{n}{optimizer}\PY{o}{=}\PY{l+s+s1}{\PYZsq{}}\PY{l+s+s1}{adam}\PY{l+s+s1}{\PYZsq{}}\PY{p}{)}
         \PY{n}{model}\PY{o}{.}\PY{n}{fit}\PY{p}{(}\PY{n}{train\PYZus{}x}\PY{p}{,}\PY{n}{train\PYZus{}y}\PY{p}{,}\PY{n}{epochs}\PY{o}{=}\PY{n}{number\PYZus{}epo}\PY{p}{,}\PY{n}{verbose}\PY{o}{=}\PY{l+m+mi}{0}\PY{p}{,}\PY{n}{batch\PYZus{}size}\PY{o}{=}\PY{l+m+mi}{10}\PY{p}{,}
                   \PY{n}{validation\PYZus{}data}\PY{o}{=}\PY{p}{(}\PY{n}{test\PYZus{}x}\PY{p}{,} \PY{n}{test\PYZus{}y}\PY{p}{)}\PY{p}{)} 
\end{Verbatim}

    \begin{Verbatim}[commandchars=\\\{\}]
\_\_\_\_\_\_\_\_\_\_\_\_\_\_\_\_\_\_\_\_\_\_\_\_\_\_\_\_\_\_\_\_\_\_\_\_\_\_\_\_\_\_\_\_\_\_\_\_\_\_\_\_\_\_\_\_\_\_\_\_\_\_\_\_\_
Layer (type)                 Output Shape              Param \#   
=================================================================
dense\_148 (Dense)            (None, 5)                 10        
\_\_\_\_\_\_\_\_\_\_\_\_\_\_\_\_\_\_\_\_\_\_\_\_\_\_\_\_\_\_\_\_\_\_\_\_\_\_\_\_\_\_\_\_\_\_\_\_\_\_\_\_\_\_\_\_\_\_\_\_\_\_\_\_\_
dense\_149 (Dense)            (None, 10)                60        
\_\_\_\_\_\_\_\_\_\_\_\_\_\_\_\_\_\_\_\_\_\_\_\_\_\_\_\_\_\_\_\_\_\_\_\_\_\_\_\_\_\_\_\_\_\_\_\_\_\_\_\_\_\_\_\_\_\_\_\_\_\_\_\_\_
dense\_150 (Dense)            (None, 20)                220       
\_\_\_\_\_\_\_\_\_\_\_\_\_\_\_\_\_\_\_\_\_\_\_\_\_\_\_\_\_\_\_\_\_\_\_\_\_\_\_\_\_\_\_\_\_\_\_\_\_\_\_\_\_\_\_\_\_\_\_\_\_\_\_\_\_
dense\_151 (Dense)            (None, 15)                315       
\_\_\_\_\_\_\_\_\_\_\_\_\_\_\_\_\_\_\_\_\_\_\_\_\_\_\_\_\_\_\_\_\_\_\_\_\_\_\_\_\_\_\_\_\_\_\_\_\_\_\_\_\_\_\_\_\_\_\_\_\_\_\_\_\_
dense\_152 (Dense)            (None, 25)                400       
\_\_\_\_\_\_\_\_\_\_\_\_\_\_\_\_\_\_\_\_\_\_\_\_\_\_\_\_\_\_\_\_\_\_\_\_\_\_\_\_\_\_\_\_\_\_\_\_\_\_\_\_\_\_\_\_\_\_\_\_\_\_\_\_\_
dense\_153 (Dense)            (None, 20)                520       
\_\_\_\_\_\_\_\_\_\_\_\_\_\_\_\_\_\_\_\_\_\_\_\_\_\_\_\_\_\_\_\_\_\_\_\_\_\_\_\_\_\_\_\_\_\_\_\_\_\_\_\_\_\_\_\_\_\_\_\_\_\_\_\_\_
dense\_154 (Dense)            (None, 25)                525       
\_\_\_\_\_\_\_\_\_\_\_\_\_\_\_\_\_\_\_\_\_\_\_\_\_\_\_\_\_\_\_\_\_\_\_\_\_\_\_\_\_\_\_\_\_\_\_\_\_\_\_\_\_\_\_\_\_\_\_\_\_\_\_\_\_
dense\_155 (Dense)            (None, 20)                520       
\_\_\_\_\_\_\_\_\_\_\_\_\_\_\_\_\_\_\_\_\_\_\_\_\_\_\_\_\_\_\_\_\_\_\_\_\_\_\_\_\_\_\_\_\_\_\_\_\_\_\_\_\_\_\_\_\_\_\_\_\_\_\_\_\_
dropout\_20 (Dropout)         (None, 20)                0         
\_\_\_\_\_\_\_\_\_\_\_\_\_\_\_\_\_\_\_\_\_\_\_\_\_\_\_\_\_\_\_\_\_\_\_\_\_\_\_\_\_\_\_\_\_\_\_\_\_\_\_\_\_\_\_\_\_\_\_\_\_\_\_\_\_
dense\_156 (Dense)            (None, 8)                 168       
\_\_\_\_\_\_\_\_\_\_\_\_\_\_\_\_\_\_\_\_\_\_\_\_\_\_\_\_\_\_\_\_\_\_\_\_\_\_\_\_\_\_\_\_\_\_\_\_\_\_\_\_\_\_\_\_\_\_\_\_\_\_\_\_\_
dropout\_21 (Dropout)         (None, 8)                 0         
\_\_\_\_\_\_\_\_\_\_\_\_\_\_\_\_\_\_\_\_\_\_\_\_\_\_\_\_\_\_\_\_\_\_\_\_\_\_\_\_\_\_\_\_\_\_\_\_\_\_\_\_\_\_\_\_\_\_\_\_\_\_\_\_\_
dense\_157 (Dense)            (None, 1)                 9         
=================================================================
Total params: 2,747
Trainable params: 2,747
Non-trainable params: 0
\_\_\_\_\_\_\_\_\_\_\_\_\_\_\_\_\_\_\_\_\_\_\_\_\_\_\_\_\_\_\_\_\_\_\_\_\_\_\_\_\_\_\_\_\_\_\_\_\_\_\_\_\_\_\_\_\_\_\_\_\_\_\_\_\_
Run Time : 16.343230

    \end{Verbatim}

\begin{Verbatim}[commandchars=\\\{\}]
{\color{outcolor}Out[{\color{outcolor}25}]:} [<matplotlib.lines.Line2D at 0x2441f61fda0>]
\end{Verbatim}

    \begin{Verbatim}[commandchars=\\\{\}]
{\color{incolor}In [{\color{incolor}26}]:} \PY{n}{model} \PY{o}{=} \PY{n}{Sequential}\PY{p}{(}\PY{p}{)}
         \PY{n}{model}\PY{o}{.}\PY{n}{add}\PY{p}{(}\PY{n}{Dense}\PY{p}{(}\PY{l+m+mi}{40}\PY{p}{,} \PY{n}{input\PYZus{}dim}\PY{o}{=}\PY{n}{input\PYZus{}dim}\PY{p}{,} \PY{n}{activation}\PY{o}{=}\PY{l+s+s1}{\PYZsq{}}\PY{l+s+s1}{relu}\PY{l+s+s1}{\PYZsq{}}\PY{p}{)}\PY{p}{)}
         \PY{n}{model}\PY{o}{.}\PY{n}{add}\PY{p}{(}\PY{n}{Dense}\PY{p}{(}\PY{l+m+mi}{25}\PY{p}{,} \PY{n}{activation}\PY{o}{=}\PY{l+s+s1}{\PYZsq{}}\PY{l+s+s1}{relu}\PY{l+s+s1}{\PYZsq{}}\PY{p}{)}\PY{p}{)}
         \PY{n}{model}\PY{o}{.}\PY{n}{add}\PY{p}{(}\PY{n}{Dropout}\PY{p}{(}\PY{l+m+mf}{0.2}\PY{p}{)}\PY{p}{)}
         \PY{n}{model}\PY{o}{.}\PY{n}{add}\PY{p}{(}\PY{n}{Dense}\PY{p}{(}\PY{n}{output\PYZus{}dim}\PY{p}{,} \PY{n}{activation}\PY{o}{=}\PY{l+s+s1}{\PYZsq{}}\PY{l+s+s1}{relu}\PY{l+s+s1}{\PYZsq{}}\PY{p}{)}\PY{p}{)}
         \PY{n}{model}\PY{o}{.}\PY{n}{add}\PY{p}{(}\PY{n}{Dropout}\PY{p}{(}\PY{l+m+mf}{0.2}\PY{p}{)}\PY{p}{)}
         \PY{n}{model}\PY{o}{.}\PY{n}{summary}\PY{p}{(}\PY{p}{)}
         \PY{n}{model}\PY{o}{.}\PY{n}{compile}\PY{p}{(}\PY{n}{loss}\PY{o}{=}\PY{l+s+s1}{\PYZsq{}}\PY{l+s+s1}{mean\PYZus{}squared\PYZus{}error}\PY{l+s+s1}{\PYZsq{}}\PY{p}{,} \PY{n}{optimizer}\PY{o}{=}\PY{l+s+s1}{\PYZsq{}}\PY{l+s+s1}{adam}\PY{l+s+s1}{\PYZsq{}}\PY{p}{)}
         \PY{n}{model}\PY{o}{.}\PY{n}{fit}\PY{p}{(}\PY{n}{train\PYZus{}x}\PY{p}{,}\PY{n}{train\PYZus{}y}\PY{p}{,}\PY{n}{epochs}\PY{o}{=}\PY{n}{number\PYZus{}epo}\PY{p}{,}\PY{n}{verbose}\PY{o}{=}\PY{l+m+mi}{0}\PY{p}{,}\PY{n}{batch\PYZus{}size}\PY{o}{=}\PY{l+m+mi}{10}\PY{p}{,}
                   \PY{n}{validation\PYZus{}data}\PY{o}{=}\PY{p}{(}\PY{n}{test\PYZus{}x}\PY{p}{,} \PY{n}{test\PYZus{}y}\PY{p}{)}\PY{p}{)} 
\end{Verbatim}

    \begin{Verbatim}[commandchars=\\\{\}]
\_\_\_\_\_\_\_\_\_\_\_\_\_\_\_\_\_\_\_\_\_\_\_\_\_\_\_\_\_\_\_\_\_\_\_\_\_\_\_\_\_\_\_\_\_\_\_\_\_\_\_\_\_\_\_\_\_\_\_\_\_\_\_\_\_
Layer (type)                 Output Shape              Param \#   
=================================================================
dense\_158 (Dense)            (None, 40)                80        
\_\_\_\_\_\_\_\_\_\_\_\_\_\_\_\_\_\_\_\_\_\_\_\_\_\_\_\_\_\_\_\_\_\_\_\_\_\_\_\_\_\_\_\_\_\_\_\_\_\_\_\_\_\_\_\_\_\_\_\_\_\_\_\_\_
dense\_159 (Dense)            (None, 25)                1025      
\_\_\_\_\_\_\_\_\_\_\_\_\_\_\_\_\_\_\_\_\_\_\_\_\_\_\_\_\_\_\_\_\_\_\_\_\_\_\_\_\_\_\_\_\_\_\_\_\_\_\_\_\_\_\_\_\_\_\_\_\_\_\_\_\_
dropout\_22 (Dropout)         (None, 25)                0         
\_\_\_\_\_\_\_\_\_\_\_\_\_\_\_\_\_\_\_\_\_\_\_\_\_\_\_\_\_\_\_\_\_\_\_\_\_\_\_\_\_\_\_\_\_\_\_\_\_\_\_\_\_\_\_\_\_\_\_\_\_\_\_\_\_
dense\_160 (Dense)            (None, 1)                 26        
\_\_\_\_\_\_\_\_\_\_\_\_\_\_\_\_\_\_\_\_\_\_\_\_\_\_\_\_\_\_\_\_\_\_\_\_\_\_\_\_\_\_\_\_\_\_\_\_\_\_\_\_\_\_\_\_\_\_\_\_\_\_\_\_\_
dropout\_23 (Dropout)         (None, 1)                 0         
=================================================================
Total params: 1,131
Trainable params: 1,131
Non-trainable params: 0
\_\_\_\_\_\_\_\_\_\_\_\_\_\_\_\_\_\_\_\_\_\_\_\_\_\_\_\_\_\_\_\_\_\_\_\_\_\_\_\_\_\_\_\_\_\_\_\_\_\_\_\_\_\_\_\_\_\_\_\_\_\_\_\_\_
Run Time : 14.620049

    \end{Verbatim}

\begin{Verbatim}[commandchars=\\\{\}]
{\color{outcolor}Out[{\color{outcolor}26}]:} [<matplotlib.lines.Line2D at 0x24421b63da0>]
\end{Verbatim}

    \begin{Verbatim}[commandchars=\\\{\}]
{\color{incolor}In [{\color{incolor}27}]:} \PY{n}{model} \PY{o}{=} \PY{n}{Sequential}\PY{p}{(}\PY{p}{)}
         \PY{n}{model}\PY{o}{.}\PY{n}{add}\PY{p}{(}\PY{n}{Dense}\PY{p}{(}\PY{l+m+mi}{5}\PY{p}{,} \PY{n}{input\PYZus{}dim}\PY{o}{=}\PY{n}{input\PYZus{}dim}\PY{p}{,} \PY{n}{activation}\PY{o}{=}\PY{l+s+s1}{\PYZsq{}}\PY{l+s+s1}{soft\PYZus{}plus\PYZus{}te}\PY{l+s+s1}{\PYZsq{}}\PY{p}{)}\PY{p}{)}
         \PY{n}{model}\PY{o}{.}\PY{n}{add}\PY{p}{(}\PY{n}{Dense}\PY{p}{(}\PY{l+m+mi}{10}\PY{p}{,} \PY{n}{activation}\PY{o}{=}\PY{l+s+s1}{\PYZsq{}}\PY{l+s+s1}{soft\PYZus{}plus\PYZus{}te}\PY{l+s+s1}{\PYZsq{}}\PY{p}{)}\PY{p}{)}
         \PY{n}{model}\PY{o}{.}\PY{n}{add}\PY{p}{(}\PY{n}{Dense}\PY{p}{(}\PY{l+m+mi}{20}\PY{p}{,} \PY{n}{activation}\PY{o}{=}\PY{l+s+s1}{\PYZsq{}}\PY{l+s+s1}{tanh}\PY{l+s+s1}{\PYZsq{}}\PY{p}{)}\PY{p}{)}
         \PY{n}{model}\PY{o}{.}\PY{n}{add}\PY{p}{(}\PY{n}{Dense}\PY{p}{(}\PY{l+m+mi}{15}\PY{p}{,} \PY{n}{activation}\PY{o}{=}\PY{l+s+s1}{\PYZsq{}}\PY{l+s+s1}{relu}\PY{l+s+s1}{\PYZsq{}}\PY{p}{)}\PY{p}{)}
         \PY{n}{model}\PY{o}{.}\PY{n}{add}\PY{p}{(}\PY{n}{Dense}\PY{p}{(}\PY{l+m+mi}{25}\PY{p}{,} \PY{n}{activation}\PY{o}{=}\PY{l+s+s1}{\PYZsq{}}\PY{l+s+s1}{tanh}\PY{l+s+s1}{\PYZsq{}}\PY{p}{)}\PY{p}{)}
         \PY{n}{model}\PY{o}{.}\PY{n}{add}\PY{p}{(}\PY{n}{Dense}\PY{p}{(}\PY{l+m+mi}{20}\PY{p}{,} \PY{n}{activation}\PY{o}{=}\PY{l+s+s1}{\PYZsq{}}\PY{l+s+s1}{sigmoid}\PY{l+s+s1}{\PYZsq{}}\PY{p}{)}\PY{p}{)}
         \PY{n}{model}\PY{o}{.}\PY{n}{add}\PY{p}{(}\PY{n}{Dense}\PY{p}{(}\PY{l+m+mi}{25}\PY{p}{,} \PY{n}{activation}\PY{o}{=}\PY{l+s+s1}{\PYZsq{}}\PY{l+s+s1}{relu}\PY{l+s+s1}{\PYZsq{}}\PY{p}{)}\PY{p}{)}
         \PY{n}{model}\PY{o}{.}\PY{n}{add}\PY{p}{(}\PY{n}{Dense}\PY{p}{(}\PY{n}{output\PYZus{}dim}\PY{p}{,} \PY{n}{activation}\PY{o}{=}\PY{l+s+s1}{\PYZsq{}}\PY{l+s+s1}{soft\PYZus{}plus\PYZus{}te}\PY{l+s+s1}{\PYZsq{}}\PY{p}{)}\PY{p}{)}
         \PY{n}{model}\PY{o}{.}\PY{n}{add}\PY{p}{(}\PY{n}{Dropout}\PY{p}{(}\PY{l+m+mf}{0.2}\PY{p}{)}\PY{p}{)}
         \PY{n}{model}\PY{o}{.}\PY{n}{compile}\PY{p}{(}\PY{n}{loss}\PY{o}{=}\PY{l+s+s1}{\PYZsq{}}\PY{l+s+s1}{mean\PYZus{}squared\PYZus{}error}\PY{l+s+s1}{\PYZsq{}}\PY{p}{,} \PY{n}{optimizer}\PY{o}{=}\PY{l+s+s1}{\PYZsq{}}\PY{l+s+s1}{adam}\PY{l+s+s1}{\PYZsq{}}\PY{p}{)}
         \PY{n}{model}\PY{o}{.}\PY{n}{fit}\PY{p}{(}\PY{n}{train\PYZus{}x}\PY{p}{,}\PY{n}{train\PYZus{}y}\PY{p}{,}\PY{n}{epochs}\PY{o}{=}\PY{n}{number\PYZus{}epo}\PY{p}{,}\PY{n}{verbose}\PY{o}{=}\PY{l+m+mi}{0}\PY{p}{,}\PY{n}{batch\PYZus{}size}\PY{o}{=}\PY{l+m+mi}{10}\PY{p}{,}
                   \PY{n}{validation\PYZus{}data}\PY{o}{=}\PY{p}{(}\PY{n}{test\PYZus{}x}\PY{p}{,} \PY{n}{test\PYZus{}y}\PY{p}{)}\PY{p}{)}
         \PY{n}{model}\PY{o}{.}\PY{n}{summary}\PY{p}{(}\PY{p}{)}
\end{Verbatim}

    \begin{Verbatim}[commandchars=\\\{\}]
\_\_\_\_\_\_\_\_\_\_\_\_\_\_\_\_\_\_\_\_\_\_\_\_\_\_\_\_\_\_\_\_\_\_\_\_\_\_\_\_\_\_\_\_\_\_\_\_\_\_\_\_\_\_\_\_\_\_\_\_\_\_\_\_\_
Layer (type)                 Output Shape              Param \#   
=================================================================
dense\_161 (Dense)            (None, 5)                 10        
\_\_\_\_\_\_\_\_\_\_\_\_\_\_\_\_\_\_\_\_\_\_\_\_\_\_\_\_\_\_\_\_\_\_\_\_\_\_\_\_\_\_\_\_\_\_\_\_\_\_\_\_\_\_\_\_\_\_\_\_\_\_\_\_\_
dense\_162 (Dense)            (None, 10)                60        
\_\_\_\_\_\_\_\_\_\_\_\_\_\_\_\_\_\_\_\_\_\_\_\_\_\_\_\_\_\_\_\_\_\_\_\_\_\_\_\_\_\_\_\_\_\_\_\_\_\_\_\_\_\_\_\_\_\_\_\_\_\_\_\_\_
dense\_163 (Dense)            (None, 20)                220       
\_\_\_\_\_\_\_\_\_\_\_\_\_\_\_\_\_\_\_\_\_\_\_\_\_\_\_\_\_\_\_\_\_\_\_\_\_\_\_\_\_\_\_\_\_\_\_\_\_\_\_\_\_\_\_\_\_\_\_\_\_\_\_\_\_
dense\_164 (Dense)            (None, 15)                315       
\_\_\_\_\_\_\_\_\_\_\_\_\_\_\_\_\_\_\_\_\_\_\_\_\_\_\_\_\_\_\_\_\_\_\_\_\_\_\_\_\_\_\_\_\_\_\_\_\_\_\_\_\_\_\_\_\_\_\_\_\_\_\_\_\_
dense\_165 (Dense)            (None, 25)                400       
\_\_\_\_\_\_\_\_\_\_\_\_\_\_\_\_\_\_\_\_\_\_\_\_\_\_\_\_\_\_\_\_\_\_\_\_\_\_\_\_\_\_\_\_\_\_\_\_\_\_\_\_\_\_\_\_\_\_\_\_\_\_\_\_\_
dense\_166 (Dense)            (None, 20)                520       
\_\_\_\_\_\_\_\_\_\_\_\_\_\_\_\_\_\_\_\_\_\_\_\_\_\_\_\_\_\_\_\_\_\_\_\_\_\_\_\_\_\_\_\_\_\_\_\_\_\_\_\_\_\_\_\_\_\_\_\_\_\_\_\_\_
dense\_167 (Dense)            (None, 25)                525       
\_\_\_\_\_\_\_\_\_\_\_\_\_\_\_\_\_\_\_\_\_\_\_\_\_\_\_\_\_\_\_\_\_\_\_\_\_\_\_\_\_\_\_\_\_\_\_\_\_\_\_\_\_\_\_\_\_\_\_\_\_\_\_\_\_
dense\_168 (Dense)            (None, 1)                 26        
\_\_\_\_\_\_\_\_\_\_\_\_\_\_\_\_\_\_\_\_\_\_\_\_\_\_\_\_\_\_\_\_\_\_\_\_\_\_\_\_\_\_\_\_\_\_\_\_\_\_\_\_\_\_\_\_\_\_\_\_\_\_\_\_\_
dropout\_24 (Dropout)         (None, 1)                 0         
=================================================================
Total params: 2,076
Trainable params: 2,076
Non-trainable params: 0
\_\_\_\_\_\_\_\_\_\_\_\_\_\_\_\_\_\_\_\_\_\_\_\_\_\_\_\_\_\_\_\_\_\_\_\_\_\_\_\_\_\_\_\_\_\_\_\_\_\_\_\_\_\_\_\_\_\_\_\_\_\_\_\_\_

    \end{Verbatim}

\end{document}